\newtheorem{theorem}{Theorem}
\newtheorem{lemma}[theorem]{Lemma}
\newtheorem{claim}[theorem]{Claim}
\newcommand {\ignore} [1] {}
\def \eqdef {:=}
\DeclareMathOperator{\supp}{supp}
\DeclareMathOperator{\sign}{sign}
\DeclareMathOperator{\margin}{margin}
\providecommand{\eqdef}{:=}
\newcommand{\etal}{{\em et al.\ }\xspace}
\title{Margin-Based Generalization Lower Bounds for Boosted Classifiers}
\author{Allan Gr{\o}nlund \thanks{Computer Science Department. Aarhus University. \texttt{jallan@cs.au.dk}.} \qquad 
Lior Kamma \thanks{Computer Science Department. Aarhus University. Supported by a Villum Young Investigator Grant \texttt{lior.kamma@cs.au.dk}.} 
\qquad 
Kasper Green Larsen \thanks{Computer Science Department. Aarhus University. Supported by a Villum Young
    Investigator Grant and an AUFF Starting Grant. \texttt{larsen@cs.au.dk}. }
\\ \\ 
Alexander Mathiasen \thanks{Computer Science Department. Aarhus University. Supported by an AUFF Starting Grant. \texttt{alexmath@cs.au.dk}.}
\qquad
Jelani Nelson \thanks{Department of Electronic Engineering and Computer Science. UC Berkeley. \texttt{minilek@berkeley.edu}. Supported by NSF CAREER award CCF-1350670, NSF grant IIS-1447471, ONR grant N00014-18-1-2562, ONR DORECG award N00014-17-1-2127, an Alfred P.\ Sloan Research Fellowship, and a Google Faculty Research Award.}
}
\date{}
\renewcommand{\H}{\mathcal{H}}
\newcommand{\Hsp}{\hat{\mathcal{H}}}
\newcommand{\D}{\mathcal{D}}
\newcommand{\A}{\mathcal{A}}
\newcommand{\C}{\mathcal{C}}
\newcommand{\eps}{\varepsilon}
\newcommand{\Xs}{\mathcal{X}}
\newcommand{\h}{\hat{h}}
\newcommand{\f}{\hat{f}}
\newcommand{\ellsp}{\hat{\ell}}
\renewcommand{\L}{\mathcal{L}}
\begin{document}
\maketitle

\begin{abstract}
Boosting is one of the most successful ideas in machine
learning. The most well-accepted explanations for the low
generalization error of boosting algorithms such as AdaBoost stem from
margin theory. The study of margins in the context of boosting
algorithms was initiated by Schapire, Freund, Bartlett and Lee (1998) and has inspired numerous
boosting algorithms and generalization bounds. To date, the strongest
known generalization (upper bound) is the $k$th margin bound of Gao and
Zhou (2013). Despite the numerous generalization upper bounds that
have been proved over the last two decades,
nothing is known about the tightness of these bounds. In this paper,
we give the first margin-based lower bounds on the generalization
error of boosted classifiers. Our lower bounds nearly match the
$k$th margin bound and thus almost settle the generalization performance
of boosted classifiers in terms of margins.
\end{abstract}

\section{Introduction} 
\emph{Boosting algorithms} produce highly accurate classifiers by combining several less accurate classifiers and are amongst the most popular learning algorithms, obtaining state-of-the-art performance on several benchmark machine learning tasks \cite{lightgbm,xgboost}. The most famous of these boosting algorithm is arguably AdaBoost~\cite{adaboost}. 
For binary classification, AdaBoost takes a training set $S=\langle (x_1,y_1),\ldots,(x_m,y_m) \rangle$ of $m$ labeled samples as input, with $x_i \in \Xs$ and labels $y_i \in \{-1,1\}$. It then produces a classifier $f$ in iterations: in the $j$th iteration, a base classifier $h_j : \Xs \to \{-1,1\}$ is trained on a reweighed version of $S$ that emphasizes data points that $f$ struggles with and this classifier is then added to $f$. The final classifier is obtained by taking the sign of $f(x) = \sum_j \alpha_j h_j(x)$, where the $\alpha_j$'s are non-negative coefficients carefully chosen by AdaBoost. The base classifiers $h_j$ all come from a \emph{hypothesis set} $\H$, e.g. $\H$ could be a set of small decision trees or similar. As AdaBoost's training progresses, more and more base classifiers are added to $f$, which in turn causes the training error of $f$ to decrease. If $\H$ is rich enough, AdaBoost will eventually classify all the data points in the training set correctly \cite{adaboost}.

Early experiments with AdaBoost report a surprising generalization phenomenon \cite{schapire1998boosting}. Even after perfectly classifying the entire training set, further iterations keeps improving the test accuracy. This is contrary to what one would expect, as $f$ gets more complicated with more iterations, and thus prone to overfitting. The most prominent explanation for this phenomena is margin theory, introduced by Schapire \etal \cite{schapire1998boosting}. The margin of a training point $(x_i,y_i)$ is a number in $[-1,1]$, which can be interpreted, loosely speaking, as the classifier's confidence on that point. Formally, we say that $f(x)=\sum_j \alpha_j h_j(x)$ is a \emph{voting classifier} if $\alpha_j \geq 0$ for all $j$. Note that one can additionally assume without loss of generality that $\sum_j \alpha_j = 1$ since normalizing each $\alpha_i$ by $\sum_j \alpha_j$ leaves the sign of $f(x_i)$ unchanged. The margin of a point $(x_i, y_i)$ with respect to a voting classifier $f$ is then defined as
\begin{eqnarray*}
\margin(x_i) := y_if(x_i) &=& y_i \sum_j \alpha_j h_j(x_i) \;.
\end{eqnarray*}
Thus $\margin(x_i) \in [-1,1]$, and if $\margin(x_i) >0$, then taking the sign of $f(x_i)$ correctly classifies $(x_i,y_i)$. Informally speaking, margin theory guarantees that voting classifiers with large (positive) margins have a smaller generalization error. Experimentally AdaBoost has been found to continue to improve the margins even when training past the point of perfectly classifying the training set. Margin theory may therefore explain the surprising generalization phenomena of AdaBoost. Indeed, the original paper by Schapire \etal~\cite{schapire1998boosting} that introduced margin theory, proved the following margin-based generalization bound. Let $\D$ be an unknown distribution over $\Xs \times \{-1,1\}$ and assume that the training data $S$ is obtained by drawing $m$ i.i.d. samples from $\D$. Then with high probability over $S$ it holds that for every margin $\theta \in (0,1]$, \emph{every} voting classifier $f$ satisfies
\begin{equation}
\label{equ:original}
 \Pr_{(x,y)\sim \D}[yf(x) \le 0] \le \Pr_{(x,y)\sim S} [yf(x) < \theta] + O\left(\sqrt{\frac{\ln |\H| \ln m}{\theta^2 m}}\right).
\end{equation}
The left-hand side of the equation is the out-of-sample error of $f$ (since $\sign(f(x))\neq y$ precisely when $yf(x)<0$). On the right-hand side, we use $(x,y)\sim S$ to denote a uniform random point from $S$. Hence $\Pr_{(x,y)\sim S} [yf(x) < \theta]$ is the fraction of training points with margin less than $\theta$. The last term is increasing in $|\H|$ and decreasing in $\theta$ and $m$. Here it is assumed $\H$ is finite. A similar bound can be proved for infinite $\H$ by replacing $|\H|$ by $d \lg m$, where $d$ is the VC-dimension of $\H$. This holds for all the generalization bounds below as well. The generalization bound thus shows that $f$ has low out-of-sample error if it attains large margins on most training points. This fits well with the observed behaviour of AdaBoost in practice. 

The generalization bound above holds for every voting classifier $f$, i.e. regardless of how $f$ was obtained. Hence a natural goal is to design boosting algorithms that produce voting classifiers with large margins on many points. This has been the focus of a long line of research and has resulted in numerous algorithms with various margin guarantees, see e.g.~\cite{grove1998boosting,breiman1999prediction,bennett2000column,adaboostp,adaboostv,sparsiboost}. One of the most well-known of these is Breimann's ArcGV~\cite{breiman1999prediction}. ArcGV produces a voting classifier maximizing the \emph{minimal} margin, i.e. it produces a classifier $f$ for which $\min_{(x,y) \in S} yf(x)$ is as large as possible. Breimann complemented the algorithm with a generalization bound stating that with high probability over the sample $S$, it holds that every voting classifier $f$ satisfies:
\begin{equation}
\label{equ:minmargin}
 \Pr_{(x,y)\sim \D}[yf(x) \le 0] \le O\left(\frac{\ln |\H| \ln m}{\hat{\theta}^2 m}\right),
\end{equation}
where $\hat{\theta} = \min_{(x,y) \in S} yf(x)$ is the minimal margin over all training examples. Notice that if one chooses $\theta$ as the minimal margin in the generalization bound~\eqref{equ:original} of Schapire \etal~\cite{schapire1998boosting}, then the term $\Pr_{(x,y)\sim S} [yf(x) < \theta]$ becomes $0$ and one obtains the bound
\begin{equation*}
 \Pr_{(x,y)\sim \D}[yf(x) \le 0] \le O\left(\sqrt{\frac{\ln |\H| \ln m}{\hat{\theta}^2 m}}\right),
\end{equation*}
which is weaker than Breimann's bound and motivated his focus on maximizing the minimal margin. Minimal margin is however quite sensitive to outliers and work by Gao and Zhou~\cite{gao2013doubt} proved a generalization bound which provides an interpolation between~\eqref{equ:original} and~\eqref{equ:minmargin}. Their bound is known as the $k$th margin bound, and states that with high probability over the sample $S$, it holds for every margin $\theta \in (0, 1]$ and every voting classifier $f$ that:
\begin{equation*}
\label{equ:kmargin}
 \Pr_{(x,y)\sim \D}[yf(x)< 0] \le \Pr_{(x,y)\sim S} [yf(x) < \theta] + O\left(\frac{\ln |\H| \ln m}{\theta^2 m}
+ \sqrt{\Pr_{(x,y)\sim S} [yf(x)<\theta]\frac{\ln |\H| \ln m}{\theta^2 m}} \right).
\end{equation*}
The $k$th margin bound remains the strongest margin-based generalization bound to date (see Section~\ref{sec:related} for further details). The $k$th margin bound recovers Breimann's minimal margin bound by choosing $\theta$ as the minimal margin (making $\Pr_{(x,y)\sim S} [yf(x) < \theta]=0$), and it is always at most the same as the bound~\eqref{equ:original} by Schapire \etal As with previous generalization bounds, it suggests that boosting algorithms should focus on  obtaining a large margin on as large a fraction of training points as possible.

Despite the decades of progress on generalization \emph{upper} bounds, we still do not know how tight these bounds are. That is, we do not have any margin-based generalization \emph{lower} bounds. Generalization lower bounds are not only interesting from a theoretical point of view, but also from an algorithmic point of view: If one has a provably tight generalization bound, then a natural goal is to design a boosting algorithm minimizing a loss function that is equal to this generalization bound. This approach makes most sense with a matching lower bound as the algorithm might otherwise minimize a sub-optimal loss function. Furthermore, a lower bound may also inspire researchers to look for other parameters than margins when explaining the generalization performance of voting classifiers. Such new parameters may even prove useful in designing new algorithms, with even better generalization performance in practice.

\subsection{Our Results }
In this paper we prove the first margin-based generalization lower bounds for voting classifiers. Our lower bounds almost match the $k$th margin bound and thus essentially settles the generalization performance of voting classifiers in terms of margins.

To present our main theorems, we first introduce some notation. For a ground set $\Xs$ and hypothesis set $\H$, let $C(\H)$ denote the family of all voting classifiers over $\H$, i.e. $C(\H)$ contains all functions $f : \Xs \to [-1,1]$ that can be written as $f(x) = \sum_{h \in \H} \alpha_h h(x)$ such that $\alpha_h \geq 0$ for all $h$ and $\sum_{h} \alpha_h = 1$. For a (randomized) learning algorithm $\A$ and a sample $S$ of $m$ points, let $f_{\A,S}$ denote the (possibly random) voting classifier produced by $\A$ when given the sample $S$ as input. With this notation, our first main theorem is the following:
\begin{theorem} \label{th:lowerAlg}
For every large enough integer $N$, every $\theta \in \left(1/N, 1/40\right)$ and every $\tau \in [0,49/100]$ there exist a set $\Xs$ and a hypothesis set $\H$ over $\Xs$, such that $\ln |\H|= \Theta(\ln N)$ and for every $m = \Omega\left(\theta^{-2}\ln |\H|\right)$ and for every (randomized) learning algorithm $\A$, there exist a distribution $\D$ over $\Xs \times \{-1,1\}$ and a voting classifier $f \in C(\H)$ such that with probability at least $1/100$ over the choice of samples $S \sim \D^m$ and the random choices of $\A$
\begin{enumerate}
	\item $\Pr\limits_{(x,y)\sim S}[yf(x) < \theta] \le \tau$; and 
	\item $\Pr\limits_{(x,y)\sim \D}[yf_{\A,S}(x) < 0] \ge \tau + \Omega\left(\frac{\ln|\H|}{m \theta^2} + \sqrt{\tau \cdot \frac{\ln|\H|}{m \theta^2}}\right)$.
\end{enumerate}
\end{theorem}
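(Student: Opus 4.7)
The plan is to apply Yao's minimax principle: rather than constructing $\D$ and $f$ adversarially against each $\A$, I argue that it suffices to exhibit a distribution $\mu$ over pairs $(\D, f)$ with $\D$ over $\Xs \times \{\pm 1\}$ and $f \in C(\H)$ such that for every \emph{deterministic} algorithm $\A$, both stated conditions hold simultaneously with probability at least $1/100$ over $(\D, f) \sim \mu$ and $S \sim \D^m$. The randomized case then follows by conditioning on $\A$'s internal coins. I would take $\mu$ to be the law of $(\D_{\vec b}, f_{\vec b})$ for a uniformly random hidden bit-string $\vec b \in \{\pm 1\}^d$ with $d = \Theta(\ln N) = \Theta(\ln|\H|)$.

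For the construction I would choose a hypothesis set $\H$ of size $\Theta(N)$ over a ground set $\Xs$ with two key properties. First, for every $\vec b$, there is a voting classifier $f_{\vec b} \in C(\H)$ achieving margin exactly $\theta$ on a designated hard subset $\Xs_{\mathrm{hard}} \subseteq \Xs$ and margin at least $\theta$ elsewhere, where the correct label at each hard point is determined by one coordinate of $\vec b$. Second, $\H$ is arranged in an error-correcting-code-like fashion so that on each hard point $x$ only a $(1+\theta)/2$-fraction of $\H$ votes correctly under $f_{\vec b}$; a standard KL-divergence computation between a $\theta$-biased and an unbiased coin then shows that each sample $(x,y) \sim \D_{\vec b}$ conveys only $\Theta(\theta^2)$ bits about $\vec b$, so the effective sample size for learning $\vec b$ is $\Theta(m\theta^2)$. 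The distribution $\D_{\vec b}$ itself is a mixture of a uniform law on $\Xs_{\mathrm{hard}}$ with a noise component of total mass $\tau$, designed so that its contribution to $\Pr_{(x,y)\sim S}[yf_{\vec b}(x) < \theta]$ is at most $\tau$ with probability $1 - o(1)$.

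The lower bound then splits into two contributions. By Fano's inequality applied to the channel $\vec b \mapsto S$, whose capacity is $O(m\theta^2)$ bits while $H(\vec b) = \Theta(\ln|\H|)$, any deterministic $\A$ must produce a classifier disagreeing with $f_{\vec b}$ in sign on an $\Omega(\ln|\H|/(m\theta^2))$ fraction of $\Xs_{\mathrm{hard}}$ with constant probability, which by construction of $\D_{\vec b}$ is generalization error $\Omega(\ln|\H|/(m\theta^2))$. The $\Omega(\sqrt{\tau \ln|\H|/(m\theta^2)})$ term follows from a Bernstein-style refinement: the empirical count of noise points in $S$ has standard deviation $\Theta(\sqrt{\tau m})$, and each unit of fluctuation in this count incurs $\Theta(\sqrt{1/(m\theta^2)})$ additional generalization error through the same information bottleneck. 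A Chernoff bound on the noise count gives condition (1) with high probability and a Markov-type anti-concentration gives condition (2) with probability $\Omega(1)$; a union bound yields both with probability at least $1/100$.

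The main obstacle is designing $\H$ to satisfy three competing requirements simultaneously: (i) $f_{\vec b} \in C(\H)$ must be realizable with the prescribed margin profile on $\Xs_{\mathrm{hard}}$, (ii) each sample must convey only $\Theta(\theta^2)$ bits about $\vec b$, and (iii) sign-disagreement with $f_{\vec b}$ on $\Xs_{\mathrm{hard}}$ must translate into actual generalization error under $\D_{\vec b}$. The natural candidate is a family of hypotheses arranged so that on each hard point $x$, the set $\{h \in \H : h(x) = +1\}$ is a pseudorandom subset of size $(1 \pm \theta)|\H|/2$, mimicking a $\theta$-biased coin channel. A secondary subtlety is that the Bernstein analysis must be carried out uniformly for $\tau \in [0, 49/100]$ to guarantee the two terms combine additively rather than one dominating the other.
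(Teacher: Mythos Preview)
The proposal has a fundamental confusion that makes the information-theoretic step break down. You assert that because only a $(1+\theta)/2$-fraction of $\H$ votes correctly on each hard point under $f_{\vec b}$, each sample $(x,y) \sim \D_{\vec b}$ conveys $\Theta(\theta^2)$ bits about $\vec b$. But the mutual information between a sample and $\vec b$ is governed by the label distribution $\D_{\vec b}$, not by the margin structure of $f_{\vec b}$: the algorithm observes $y$ directly, not the votes of the hypotheses in $\H$. If the labels on $\Xs_{\mathrm{hard}}$ are deterministic functions of $\vec b$ (as they must be for $f_{\vec b}$ to achieve margin $\theta$ on them), then a sample of a hard point reveals one full bit of $\vec b$, not $\Theta(\theta^2)$ bits, and Fano gives nothing once $m \gtrsim d = \Theta(\ln|\H|)$. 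Conversely, if you make the labels themselves $\theta$-biased to throttle the channel, the Bayes error becomes $(1-\theta)/2$ and condition~(1) is hopeless. In the paper, $\theta$ enters through a completely different mechanism: the \emph{size of the domain}. One takes $|\Xs| = \Theta(\theta^{-2}\ln|\H|)$ points with deterministic labels and mass $\Theta(1/m)$ each (plus one heavy point), so that a constant fraction of them are never sampled and the algorithm must guess, incurring error $\Omega(|\Xs|/m) = \Omega(\ln|\H|/(m\theta^2))$. The real work is then Lemma~\ref{l:hypoDist}, which shows that a hypothesis set of size $N$ is rich enough to realize a margin-$\theta$ voting classifier over a domain of exactly this size; this is where $\theta^{-2}$ genuinely comes from, and your ``error-correcting-code'' sketch does not address it.

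Your account of the $\sqrt{\tau}$ term is also off: it does not arise from fluctuations in the count of noise points. The paper instead places mass $\beta = \Theta(\tau)$ on a second block of $d$ points whose labels are $\alpha$-biased with $\alpha = \Theta\bigl(\sqrt{\ln|\H|/(\tau m \theta^2)}\bigr)$. Every classifier pays Bayes error $(1-\alpha)\beta/2$ on this block, and the algorithm pays an additional $\Omega(\alpha\beta)$ because it sees only $\Theta(\beta m/d)$ samples per point, too few to identify the bias direction (this uses the Anthony--Bartlett $\Phi$-function bound, not Fano). The excess over $\tau$ is then $\Theta(\alpha\beta) = \Theta\bigl(\sqrt{\tau\ln|\H|/(m\theta^2)}\bigr)$. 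Without this two-block decomposition---rarely-sampled deterministic points for the first term, $\alpha$-biased points for the second---there is no clear way to make the two terms add.
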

Theorem~\ref{th:lowerAlg} states that for any algorithm $\A$, there is a distribution $\D$ for which the out-of-sample error of the voting classifier produced by $\A$ is at least that in the second point of the theorem. At the same time, one can find a voting classifier $f$ obtaining a margin of at least $\theta$ on at least a $1-\tau$ fraction of the sample points. Our proof of Theorem~\ref{th:lowerAlg} not only shows that such a classifier exists, but also provides an algorithm that constructs such a classifier. 
Loosely speaking, the first part of the theorem reflects on the nature of the distribution $\D$ and the hypothesis set $\H$. Intuitively it means that the distribution is not too hard and the hypothesis set is rich enough, so that it is possible to construct a voting classifier with good empirical margins. Clearly, we cannot hope to prove that the algorithm $\A$ constructs a voting classifier that has a margin of at least $\theta$ on a $1-\tau$ fraction of the sample set, since we make no assumptions on the algorithm. For example, if the constant hypothesis $h_1$ that always outputs $1$ is in $\H$, then $\A$ could be the algorithm that simply outputs $h_1$. The interpretation is thus: $\D$ and $\H$ allow for an algorithm $\A$ to produce a voting classifier $f$ with margin at least $\theta$ on a $1-\tau$ fraction of samples. 
The second part of the theorem thus guarantees that regardless of which voting classifier $\A$ produces, it still has large out-of-sample error. 
This implies that every algorithm that constructs a voting classifier by minimizing the empirical risk, must have a large error. Formally, Theorem~\ref{th:lowerAlg} implies that if $\Pr_{(x,y)\sim S}[yf_{\A,S}(x)>\theta] \le \tau$ then $$\Pr\limits_{(x,y)\sim \D}[yf_{\A,S}(x) < 0] \ge \Pr_{(x,y)\sim S}[yf_{\A,S}(x)<\theta] + \Omega\left(\frac{\ln|\H|}{m \theta^2} + \sqrt{\tau \cdot \frac{\ln|\H|}{m \theta^2}}\right)\;.$$
The first part of the theorem ensures that the condition is not void. That is, there exists an algorithm $\A$ for which $\Pr_{(x,y)\sim S}[yf_{\A,S}(x)<\theta] \le \tau$.
Comparing Theorem~\ref{th:lowerAlg} to the $k$th margin bound, we see that the parameter $\tau$ corresponds to $\Pr_{(x,y) \sim S}[yf(x) < \theta]$. The magnitude of the out-of-sample error in the second point in the theorem thus matches that of the $k$th margin bound, except for a factor $\ln m$ in the first term inside the $\Omega(\cdot )$ and a $\sqrt{\ln m}$ factor in the second term. If we consider the range of parameters $\theta, \tau, \ln |\H|$ and $m$ for which the lower bound applies, then these ranges are almost as tight as possible. For $\tau$, note that the theorem cannot generally be true for $\tau > 1/2$, as the algorithm $\A$ that outputs a uniform random choice of hypothesis among $h_1$ and $h_{-1}$ (the constant hypothesis outputting $-1$), gives a (random) voting classifier $f_{\A,S}$ with an expected out-of-sample error of $1/2$. This is less than the second point of the theorem would state if it was true for $\tau > 1/2$. For $\ln |\H|$, observe that our theorem holds for arbitrarily large values of $|\H|$. That is, the integer $N$ can be as large as desired, making $\ln |\H| = \Theta(\ln N)$ as large as desired. Finally, for the constraint on $m$, notice again that the theorem simply cannot be true for smaller values of $m$ as then the term $\ln|\H|/(m \theta^2)$ exceeds $1$.

Our second main result gets even closer to the $k$th margin bound:
\begin{theorem} \label{th:lowerEx}
For every large enough integer $N$, every $\theta \in \left(1/N, 1/40\right)$, $\tau \in [0,49/100]$ and every $\left(\theta^{-2}\ln N\right)^{1 + \Omega(1)} \le m \le 2^{N^{O(1)}}$, there exist a set $\Xs$, a hypothesis set $\H$ over $\Xs$ and a distribution $\D$ over $\Xs \times \{-1,1\}$ such that $\ln |\H|= \Theta(\ln N)$ and with probability at least $1/100$ over the choice of samples $S \sim \D^m$ there exists a voting classifier $f_S \in C(\H)$  such that
\begin{enumerate}
	\item $\Pr\limits_{(x,y)\sim S}[yf_S(x) < \theta] \le \tau$; and 
	\item $\Pr\limits_{(x,y)\sim \D}[yf_S(x) < 0] \ge \tau + \Omega\left(\frac{\ln|\H|\ln m}{m \theta^2} + \sqrt{\tau \cdot \frac{\ln|\H|}{m \theta^2}}\right)$.
\end{enumerate}
\end{theorem}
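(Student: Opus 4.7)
The plan is to build on the construction underlying Theorem~\ref{th:lowerAlg} and exploit the extra freedom that, in Theorem~\ref{th:lowerEx}, the witness classifier $f_S$ is allowed to depend on $S$. This adaptivity is the source of the extra $\ln m$ factor in the leading term, and it should be extracted via a coupon-collector / second-moment argument over portions of the domain that happen to be unsampled by $S$. The upper restriction $m \le 2^{N^{O(1)}}$ simply reflects that the ground set $\Xs$ used in the lower bound must be allowed to scale with $m$ while still carrying a hypothesis class $\H$ of size $|\H| = \mathrm{poly}(N)$, i.e.\ $\ln|\H| = \Theta(\ln N)$.

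Concretely, I would retain the hypothesis family $\H$ and the structural realizability gadget from the proof of Theorem~\ref{th:lowerAlg}: any sufficiently small subregion $R$ of a large ``hard'' region $\U \subseteq \Xs$ is the negative region of some voting classifier $f \in C(\H)$ attaining margin at least $\theta$ on $\Xs \setminus R$. I would take $\D$ to be a mixture of a ``bulk'' component on a small set (used to absorb the $\tau m$ sample points on which $f_S$ is permitted to fail the $\theta$-margin) and a near-uniform component on $\U$, with per-point mass tuned so that every target subregion $R \subseteq \U$ of $\D$-mass $\Theta(\ln|\H|\ln m/(m\theta^2))$ is avoided by the whole sample with probability of order $\exp(-\Theta(\ln|\H|\ln m/\theta^2))$. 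Given $S$, the witness $f_S$ is then the classifier associated to any such $R(S)$ that is disjoint from $S$ outside the allowed failure set; the $\sqrt{\tau \cdot \ln|\H|/(m\theta^2)}$ contribution would then be obtained from the same Berry--Esseen / binomial-tail argument as in Theorem~\ref{th:lowerAlg}, applied inside the allowed-failure portion of $S$.

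The main obstacle, as I see it, is simultaneously controlling two opposing forces in the coupon-collector step. On one hand, the family $\mathcal{F}$ of candidate subregions $R$ must be large enough that a union bound over $\mathcal{F}$ beats the very small per-region avoidance probability $\exp(-\Theta(\ln|\H|\ln m/\theta^2))$; this forces $|\U|$, and hence the bookkeeping of the construction, to be large, which is exactly why the proof needs the upper bound $m \le 2^{N^{O(1)}}$. On the other hand, every $R \in \mathcal{F}$ must also be realizable as the negative region of a margin-$\theta$ voting classifier in $C(\H)$ with $\ln|\H| = \Theta(\ln N)$, which severely constrains the combinatorial structure of $\mathcal{F}$. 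I expect that the proof reconciles these via a second-moment / Paley--Zygmund argument on $|\{R \in \mathcal{F} : R \cap S = \emptyset\}|$, using near-independence across disjoint candidate regions and exploiting the slightly stronger sample-size lower bound $m \ge (\theta^{-2}\ln N)^{1 + \Omega(1)}$ to guarantee polynomial slack between the two effects.
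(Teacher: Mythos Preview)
Your high-level idea is right---the extra $\ln m$ comes from adaptively locating an unsampled piece of the domain after seeing $S$---but the paper's execution is simpler than your sketch, and the ``main obstacle'' you identify is a red herring.

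The paper enlarges $\Xs$ to $u=\Theta(m/\ln m)$ points and puts mass $\Theta(\ln m/m)$ on each of the first $u-d$ of them (all labeled $+1$), with $d=\Theta(\ln N/\theta^2)$. A direct coupon-collector tail bound (Janson's inequality on sums of geometric variables, Claim~\ref{c:lowerSumPsi12Ex}) shows that with high probability at least $d$ \emph{individual points} go unsampled; the adversarial sign pattern $h_{\ell,S}$ simply flips any $d$ of them, contributing out-of-sample error $d\cdot\Theta(\ln m/m)=\Theta(\ln|\H|\ln m/(m\theta^2))$. There is no pre-specified family $\mathcal{F}$ of regions and no second-moment or Paley--Zygmund step. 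Your worry about reconciling ``$|\mathcal{F}|$ large'' with ``every $R\in\mathcal{F}$ realizable'' never arises: Lemma~\ref{l:hypoDist} together with Yao's minimax (Claim~\ref{c:HspExistsEx}) produces a fixed $\hat{\H}$ that realizes at margin $\theta$ a $1-1/N$ fraction of \emph{all} $2d$-sparse labelings, not just those from some structured family. The step you did not anticipate, and which is the actual subtlety, is matching this ``for most labelings'' guarantee to the particular sign pattern $h_{\ell,S}$ determined by $S$; the paper handles it by fixing the first block to all $+1$'s and showing (Claim~\ref{c:highProbSim}) that, over the randomness of $\ell$ on the noisy block and of $S$, the induced $h_{\ell,S}$ is distributed like a uniform element of $\L(u,d)$ conditioned on that first block.

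Two smaller mismatches: the $\sqrt{\tau\,\ln|\H|/(m\theta^2)}$ term is obtained exactly as in Theorem~\ref{th:lowerAlg}, via the Anthony--Bartlett label-noise lower bound on the last $d$ points with bias $\alpha$, not via a Berry--Esseen computation; and the assumption $m\ge(\theta^{-2}\ln N)^{1+\Omega(1)}$ is used only to guarantee $d\ll u$ so that the coupon-collector count comfortably exceeds $d$, not to create slack between competing union bounds.
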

Observe that the second point of Theorem~\ref{th:lowerEx} has an additional $\ln m$ factor on the first term in $\Omega(\cdot )$ compared to Theorem~\ref{th:lowerAlg}. It is thus only off from the $k$th margin bound by a $\sqrt{\ln m}$ factor in the second term and hence completely matches the $k$th margin bound for small values of $\tau$. 
To obtain this strengthening, we replaced the guarantee in Theorem~\ref{th:lowerAlg} saying that \emph{all} algorithms $\A$ have such a large out-of-sample error. Instead, Theorem~\ref{th:lowerEx} demonstrates only the existence of a voting classifier $f_S$ (that is chosen as a function of the sample $S$) that simultaneously achieves a margin of at least $\theta$ on a $1-\tau$ fraction of the sample points, and yet has out-of-sample error at least that in point 2. Since the $k$th margin bound holds with high probability \emph{for all} voting classifiers, Theorem~\ref{th:lowerEx} rules out any strengthening of the $k$th margin bound, except for possibly a $\sqrt{\ln m}$ factor on the second additive term. Again, our lower bound holds for almost the full range of parameters of interest. As for the bound on $m$, our proof assumes $m \ge \left(\theta^{-2}\ln N\right)^{1+1/8}$, however the theorem holds for any constant greater than $1$ in the exponent. Finally, as for the upper bound on $m$, our proof holds for every universal constant $\Gamma>0$ such that $m \le 2^{N^{\Gamma}}$.

Finally, we mention that both our lower bounds are proved for a finite hypothesis set $\H$. This only makes the lower bounds stronger than if we proved it for an infinite $\H$ with bounded VC-dimension, since the VC-dimension of a finite $\H$, is no more than $\lg |\H|$.

\subsection{Related Work} \label{sec:related}
We mentioned above that the $k$th margin bound is the strongest
margin-based generalization bound to date. Technically speaking, it is
incomparable to the so-called \emph{emargin} bound by Wang \etal~\cite{wang2011refined}. The $k$th margin bound by Gao and Zhou~\cite{gao2013doubt}, the minimum
margin bound by Breimann~\cite{breiman1999prediction} and the bound by Schapire \etal~\cite{schapire1998boosting}
all have the form $\Pr_{(x,y)\sim \D}[yf(x)< 0] \le \Pr_{(x,y)\sim S}
[yf(x) < \theta] +\Gamma(\theta, m , |\H|, \Pr_{(x,y)\sim S}
[yf(x) < \theta])$ for some function
$\Gamma$. The emargin bound has a different (and quite involved)
form, making it harder to interpret and compute. We will not discuss it in further detail here and just
remark that our results show that for generalization bounds of the
form studied in most previous work~\cite{schapire1998boosting,breiman1999prediction,gao2013doubt}, one cannot
hope for much stronger upper bounds than the $k$th margin bound.

\section{Proof Overview} \label{sec:overview}
The main argument that lies in the heart of both proofs is a probabilistic method argument. Let $\Xs$ be a set of size $u$. 
With every labeling $\ell \in \{-1,1\}^u$ we associate a distribution $\D_\ell$ over $\Xs\times\{-1,1\}$.
We then show that with some positive probability if we sample $\ell \in \{-1,1\}^u$, $\D_\ell$ satisfies the requirements of Theorem~\ref{th:lowerAlg} (respectively Theorem~\ref{th:lowerEx}). We thus conclude the existence of a suitable distribution. We next give a more detailed high-level description of the proof for Theorem~\ref{th:lowerAlg}. The proof of Theorem~\ref{th:lowerEx} follows similar lines.
\paragraph{Constructing a Family of Distributions.} We start by first describing the construction of $\D_\ell$ for $\ell \in \{-1,1\}^u$. Our construction combines previously studied distribution patterns in a subtle manner.

Ehrenfeucht \etal \cite{EHKV89} observed that if a distribution $\D$ assigns each point in $\Xs$ a fixed (yet unknown) label, then, loosely speaking, every classifier $f$, that is constructed using only information supplied by a sample $S$, cannot do better than random guessing the labels for the points in $\Xs \setminus S$.
Intuitively, consider a uniform distribution $\D_\ell$ over $\Xs$. If we assume, for example,  that $|\Xs| \ge 10m$, then with very high probability over a sample $S$ of $m$ points, many elements of $\Xs$ are not in $S$. 
Moreover, assume that $\D_\ell$ associates every $x \in \Xs$ with a unique "correct" label $\ell(x)$. Consider some (perhaps random) learning algorithm $\A$, and let $f_{\A,S}$ be the classifier it produces given a sample $S$ as input. If $\ell$ is chosen randomly, then, loosely speaking, for every point $x$ not in the sample, $f_{\A,S}(x)$ and $\ell(x)$ are independent, and thus $\A$ returns the wrong label with probability $1/2$. In turn, this implies that there exists a labeling $\ell$ such that $\A$ is wrong on a constant fraction of $\Xs$ when receiving a sample $S \sim \D_\ell^m$. 
While the argument above can in fact be used to prove an arbitrarily large generalization error, it requires $|\Xs|$ to be large, and specifically to increase with $m$. This conflicts with the first point in Theorem~\ref{th:lowerAlg}, that is, we have to argue that a voting classifier $f$ with good margins exist for the sample $S$. If $S$ consists of $m$ distinct points, and each point in $\Xs$ can have an arbitrary label, then intuitively $\H$ needs to be very large to ensure the existence of $f$. In order to overcome this difficulty, we set $\D_\ell$ to assign very high probability to one designated point in $\Xs$, and the rest of the probability mass is then equally distributed between all other points. The argument above still applies for the subset of small-probability points. More precisely, if $\D_\ell$ assigns all but one point in $\Xs$ probability $\tfrac{1}{10m}$, then the expected generalization error (over the choice of $\ell$) is still $\Omega\left(\frac{1}{10m} |\Xs|\right)$. It remains to determine how large can we set $|\Xs|$. 
In the notations of the theorem, in order for a hypothesis set $\H$ to satisfy $\ln |\H| = \Theta(\ln N)$, and at the same time, have an $f \in C(\H)$ obtaining margins of $\theta$ on most points in a sample, our proof (and specifically Lemma~\ref{l:hypoDist}, described hereafter) requires $\Xs$ to be not significantly larger than $\frac{\ln N}{\theta^2}$, and therefore the generalization error we get is $\Omega\left(\frac{\ln |\H|}{\theta^2 m}\right)$. This accounts for the first term inside the $\Omega$-notation in the second point of Theorem~\ref{th:lowerAlg}.

Anthony and Bartlett~\cite[Chapter~5]{AB09} additionally observed that for a distribution $\D$ that assigns each point in $\Xs$ a random label, if $S$ does not sample a point $x$ enough times, any classifier $f$, that is constructed using only information supplied by $S$, cannot determine with good probability the Bayes label of $x$, that is, the label of $x$ that minimizes the error probability.
Intuitively, consider once more a distribution $\D_\ell$ that is uniform over $\Xs$. However, instead of associating every point $x \in \Xs$ with one correct label $\ell(x)$, $\D_\ell$ is now only slightly biased towards $\ell$. That is, given that $x$ is sampled, the label in the sample point is $\ell(x)$ with probability that is a little larger than $1/2$, say $(1+\alpha)/2$ for some small $\alpha \in (0,1)$.  Note that every classifier $f$ has an error probability of at least $(1-\alpha)/2$ on every given point in $\Xs$. Consider once again a learning algorithm $\A$ and the voting classifier $f_{\A,S}$ it constructs. Loosely speaking, if $S$ does not sample a point $x$ enough times, then with good probability $f_{\A,S}(x) \ne \ell(x)$. More formally, in order to correctly assign the Bayes label of $x$, an algorithm must see $\Omega(\alpha^{-2})$ samples of $x$. Therefore if we set the bias $\alpha$ to be $\sqrt{|\Xs|/(10m)}$, then with high probability the algorithm does not see a constant fraction of $\Xs$ enough times to correctly assign their label. 
In turn, this implies an expected generalization error of $(1-\alpha)/2 + \Omega(\sqrt{|\Xs|/m})$, where the expectation is over the choice of $\ell$. By once again letting $|\Xs| = \frac{\ln N}{\theta^2}$ we conclude that there exists a labeling $\ell$ such that for $S \sim \D_\ell^m$, the expected generalization error of $f_{\A, S}$ is $\frac{1-\alpha}{2} + \Omega\left(\sqrt{\frac{\ln |\H|}{\theta^2 m}}\right)$. This expression is almost the second term inside the $\Omega$-notation in the theorem statement, though slightly larger. 
We note, however, for large values of $m$, the in-sample error is arbitrarily close to $1/2$. One challenge is therefore to reduce the in-sample-error, and moreover guarantee that we can find a voting classifier $f$ where the $(m\tau)$'th smallest margin for $f$ is at least $\theta$, where $\tau, \theta$ are the parameters provided by the theorem statement.

To this end, our proof subtly weaves the two ideas described above and constructs a family of distributions $\{\D_\ell\}_{\ell \in \{-1,1\}^u}$. 
Informally, we partition $\Xs$ into two disjoint sets, and conditioned on the sample point $x \in \Xs$ belonging to each of the subsets, $\D_\ell$ is defined similarly to be one of the two distribution patterns defined above. 
The main difficulty lies in delicately balancing all ingredients and ensuring that we can find an $f$ with margins of at least $\theta$ on all but $\tau m$ of the sample points, while still enforcing a large generalization error.
Our proof refines the proof given by Ehrenfeucht \etal and Anthony and Bartlett and shows that not only does there exists a labeling $\ell$ such that $f_{\A,S}$ has large generalization error with respect to $\D_\ell$ (with probability at least $1/100$ over the randomness of $\A,S$), but rather that a large (constant) fraction of labelings $\ell$ share this property. This distinction becomes crucial in the proof.

\paragraph{Small yet Rich Hypothesis Sets.} The technical crux in our proofs is the construction of an appropriate hypothesis set. Loosely speaking, the size of $\H$ has to be small, and most importantly, independent of the size $m$ of the sample set. On the other hand, the set of voting classifiers $C(\H)$ is required to be rich enough to, intuitively, contain a classifier that with good probability has good in-sample margins for a sample $S \sim \D_\ell^m$ with a large fraction of labelings $\ell \in \{-1,1\}^u$. Our main technical lemma presents a distribution $\mu$ over small hypothesis sets $\H \subset \Xs \to \{-1,1\}$ such that for every {\em sparse} $\ell \in \{-1,1\}^u$, that is $\ell_i = -1$ for a small number of entries $i \in [u]$, with high probability over $\H \sim \mu$, there exists some voting classifier $f \in C(\H)$ that has minimum margin $\theta$ with $\ell$ over the entire set $\Xs$. In fact, the size of the hypothesis set does not depend on the size of $\Xs$, but only on the sparsity parameter $d$.  More formally, we show the following. 

\begin{lemma} \label{l:hypoDist}
For every $\theta \in (0,1/40)$, $\delta \in(0,1)$ and integers $d \le u$, there exists a distribution $\mu=\mu(u,d,\theta,\delta)$ over hypothesis sets $\H \subset \Xs \to \{-1,1\}$, where $\Xs$ is a set of size $u$, such that the following holds.
\begin{enumerate}
	\item For all $\H \in \supp(\mu)$, we have $|\H|=N$; and
	\item For every labeling $\ell \in \{-1,+1\}^u$, if no more
  than $d$ points $x \in \Xs$ satisfy $\ell(x) = -1$, then 
$$
\Pr_{\H \sim \mu}[\exists f \in \C(\H) : \forall x \in \Xs. \;
\ell(x)f(x) \geq \theta] \geq 1-\delta \;,
$$
\end{enumerate}
where $N = \Theta\left(\theta^{-2}\ln u \ln (\theta^{-2}\ln u\delta^{-1})e^{\Theta(\theta^2 d)}\right)$
\end{lemma}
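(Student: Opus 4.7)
I will use the probabilistic method. Define $\mu$ as the distribution over size-$N$ hypothesis sets that deterministically contains the constant-$+1$ hypothesis $h_+$, together with $N-1$ hypotheses $h_1,\ldots,h_{N-1}$ drawn i.i.d.\ uniformly from $\{-1,+1\}^{\Xs}$ (each $h_i(x)$ an independent fair coin). Then $|\H|=N$ deterministically, as required. Fix any $\ell\in\{-1,+1\}^u$ with negative set $T:=\ell^{-1}(-1)$ of size $d'\le d$; the goal is to exhibit an explicit $f_\ell\in \C(\H)$ that, with probability at least $1-\delta$ over $\H\sim\mu$, satisfies $\ell(x) f_\ell(x)\ge\theta$ for every $x\in\Xs$.

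\textbf{The explicit classifier.} Let $c$ be a sufficiently large absolute constant and put $\theta':=c\theta$. Call $h_i$ \emph{$\ell$-good} if $\sum_{x\in T}h_i(x)\le -\theta' d'$, and write $G_\ell\subseteq[N-1]$ for the set of $\ell$-good indices. Choose a weight $\beta\in(0,1)$ with $\beta\approx 1-2\theta$ (the precise feasibility interval emerges in the calculation below) and set
\[
f_\ell \;:=\; (1-\beta)\,h_+ \;+\; \frac{\beta}{|G_\ell|}\sum_{i\in G_\ell} h_i \;\in\; \C(\H).
\]
Intuitively $h_+$ already pushes $f_\ell$ above $+\theta$ on the many points outside $T$, while the averaged $\ell$-good mixture has mean close to $0$ outside $T$ and mean at most $-\theta'$ on every point of $T$. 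With $\beta$ and $\theta'$ appropriately tuned, the sum crosses below $-\theta$ on $T$ while staying above $+\theta$ on $\Xs\setminus T$.

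\textbf{Two probabilistic steps.} First, by the lower-tail Chernoff bound for $\mathrm{Bin}(d',1/2)$, the probability that a single $h_i$ is $\ell$-good satisfies $p:=\Pr[i\in G_\ell]\ge e^{-\Theta(\theta^2 d)}$. Taking $N$ of the order stated in the lemma (so that $Np/2\ge k:=\Theta(\theta^{-2}\ln(u/\delta))$), a Chernoff upper-tail on $|G_\ell|\sim\mathrm{Bin}(N-1,p)$ yields $|G_\ell|\ge k$ with probability $\ge 1-\delta/2$. Second, condition on the membership vector $(\mathbf{1}[i\in G_\ell])_i$: since the event $i\in G_\ell$ depends only on $h_i|_T$, the bits $h_i(x)$ for $x\notin T$ are still independent uniform fair coins, whereas for $x\in T$ permutation symmetry on $T$ forces $\nu:=\E[h_i(x)\mid i\in G_\ell]$ to be the same for every $x\in T$ and to satisfy $d'\nu=\E[\sum_{x\in T}h_i(x)\mid i\in G_\ell]\le -\theta' d'$, so $\nu\le -\theta'$. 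Applying Hoeffding's inequality to each of the $u$ coordinate-averages and union-bounding over $x$, with probability $\ge 1-\delta/2$ the average $\tfrac{1}{|G_\ell|}\sum_{i\in G_\ell}h_i(x)$ lies within $\theta/2$ of $0$ for every $x\notin T$ and is at most $-\theta'/2$ for every $x\in T$. On the intersection of these two good events (probability $\ge 1-\delta$), substituting into $f_\ell$ and choosing any $\beta$ in the feasibility interval $\bigl[\tfrac{1+\theta}{1+\Theta(\theta')},\,\tfrac{1-\theta}{1+\theta/2}\bigr]$---nonempty once $c$ is large enough---yields $\ell(x)f_\ell(x)\ge\theta$ at every $x$.

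\textbf{Main obstacle.} The delicate balance is that $h_+$ must carry the positive margin on the large set $\Xs\setminus T$, forcing $\beta$ bounded away from $1$, while the $\ell$-good mixture must be negative \emph{enough} on $T$ to drag $f_\ell$ below $-\theta$ despite $h_+$'s contribution, forcing $\beta$ bounded away from $0$ and $\theta'=\Omega(\theta)$. Together these constraints dictate $\theta'=\Theta(\theta)$, so the anti-concentration cost $p=e^{-\Theta(\theta^2 d)}$ of being $\ell$-good is precisely what produces the $e^{\Theta(\theta^2 d)}$ factor in $N$; the $\ln u$ factor arises from Hoeffding's inequality over $u$ coordinates, while the additional $\ln(\theta^{-2}\ln u\,\delta^{-1})$ factor is the standard slack needed to boost the success probability of the $|G_\ell|\ge k$ event from constant to $1-\delta$.
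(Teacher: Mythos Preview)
Your proof is correct and takes a genuinely different route from the paper's. Both use the same distribution $\mu$---the constant $+1$ hypothesis together with $N-1$ uniformly random hypotheses---but the witnessing voting classifier is built very differently. The paper partitions the random hypotheses into $k=\Theta(\theta^{-2}\ln u)$ batches and runs a fixed-step variant of AdaBoost: in round $j$ it extracts from batch $\H_j$ a hypothesis with weighted error at most $\tfrac12-\gamma$ under the current reweighting $D_j$ (existence in each batch holds with probability $1-\delta/k$ by Rademacher anti-concentration, specifically Montgomery--Smith), and the standard exponential-potential margin analysis of AdaBoost then gives minimum margin $\ge\theta$ for the uniform combination $\tfrac1k\sum_j h_j$. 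Your construction is one-shot: filter all the random hypotheses by the single deterministic test $\sum_{x\in T}h_i(x)\le-\theta'd'$, average the survivors, and mix with $h_+$; Hoeffding over the $\ge k$ surviving indices at each of the $u$ coordinates replaces the iterative reweighting, and the margin verification reduces to a pair of linear inequalities in $\beta$. Your argument is more elementary---no boosting machinery, just binomial tails and Hoeffding---and in fact yields a slightly smaller admissible $N$ (of order $\theta^{-2}\ln(u/\delta)\,e^{\Theta(\theta^2 d)}$, without the extra $\ln(\theta^{-2}\ln u)$ factor); the paper's argument has the virtue of tying the construction explicitly to boosting, which is thematically natural here.

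One terminological slip worth fixing: the estimate $p\ge e^{-\Theta(\theta^2 d)}$ is an \emph{anti-concentration} (lower) bound on a Rademacher tail, not a ``Chernoff bound''---Chernoff bounds tails from above. The inequality itself is standard (e.g.\ via Stirling on the relevant binomial coefficient, or the Montgomery--Smith bound the paper invokes), so correctness is unaffected. Similarly, the bound you need on $|G_\ell|$ is a \emph{lower}-tail binomial bound, not an upper-tail one.
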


In fact, we prove that if $\H$ is a random hypothesis set that also contains the hypothesis mapping all points to $1$, then with good probability $\H$ satisfies the second requirement in the theorem. 

To show the existence of a good voting classifier in $C(\H)$ our proof actually employs a slight variant of the celebrated AdaBoost algorithm, and shows that with high probability (over the choice of the random hypothesis set $\H$), the voting classifier constructed by this algorithm attains minimum margin at least $\theta$ over the entire set $\Xs$.

Note that Lemma~\ref{l:hypoDist} speaks of a distribution over hypothesis sets. When using Lemma~\ref{l:hypoDist} in our proofs, we will invoke Yao's principle to conclude the existence of a suitable fixed hypothesis set $\H$.

\paragraph{Existential Lower Bound.}
Our proof of Theorem~\ref{th:lowerEx} uses many of the same ideas as the proof of Theorem~\ref{th:lowerAlg}. The difference between the generalization lower bound (second point) in Theorem~\ref{th:lowerAlg} and \ref{th:lowerEx} is an $\ln m$ factor in the first term inside the $\Omega(\cdot)$ notation. That is, Theorem~\ref{th:lowerEx} has an $\Omega(\frac{\ln |\H| \ln m}{\theta^2 m})$ where Theorem~\ref{th:lowerAlg} has an $\Omega(\frac{\ln |\H|}{\theta^2 m})$. This term originated from having $\ln |\H|/\theta^2$ points with a probability mass of $1/10m$ in $\D_\ell$ and one point having the remaining probability mass. In the proof of Theorem~\ref{th:lowerEx}, we first exploit that we are proving an existential lower bound by assigning all points the same label $1$. That is, our hard distribution $\D$ assigns all points the label $1$ (ignoring the second half of the distribution with the random and slightly biased labels). Since we are not proving a lower bound for every algorithm, this will not cause problems. We then change $|\Xs|$ to about $m/\ln m$ and assign each point the same probability mass $\ln m/m$ in distribution $\D$. The key observation is that on a random sample $S$ of $m$ points, by a coupon-collector argument, there will still be $m^{\Omega(1)}$ points from $\Xs$ that were not sampled. From Lemma~\ref{l:hypoDist}, we can now find a voting classifier $f$, such that $\sign(f(x))$ is $1$ on all points in $x \in S$, and $-1$ on a set of $d=\ln|\H|/\theta^2$ points in $\Xs \setminus S$. This means that $f$ has out-of-sample error $\Omega(d \ln m/m) = \Omega(\frac{\ln|\H| \ln m}{\theta^2 m})$ under distribution $\D$ and obtains a margin of $\theta$ on all points in the sample $S$.

As in the proof Theorem~\ref{th:lowerAlg}, we can combine the above distribution $\D$ with the ideas of Anthony and Bartlett to add the terms depending on $\tau$ to the lower bound.

\section{Margin-Based Generalization Lower Bounds}
In this section we prove Theorems~\ref{th:lowerAlg} and \ref{th:lowerEx} assuming Lemma~\ref{l:hypoDist}, whose proof is deferred to Section~\ref{sec:lemma}, and we start by describing the outlines of the proofs. 
To this end fix some integer $N$, and fix $\theta \in \left(1/N, 1/40\right)$. Let $u$ be an integer, and let $\Xs = \{\xi_1,\ldots,\xi_u\}$ be some set with $u$ elements. 
With every $\ell \in \{-1,1\}^u$ we associate a distribution $\D_\ell$ over $\Xs \times \{-1,1\}$, and show that with some constant probability over a random choice of $\ell$, a voting classifier of interest has a high generalization probability with respect to $\D_\ell$. By a voting classifier of interest we mean one constructed by a learning algorithm in the proof of Theorem~\ref{th:lowerAlg} and an adversarial classifier in the proof of Theorem~\ref{th:lowerEx}.
We additionally show existence of a hypothesis set $\Hsp$ such that with very high (constant) probability over a random choice of $\ell \in \{-1,1\}^u$, $C(\Hsp)$ contains a voting classifier that attains high margins with $\ell$ over the entire set $\Xs$. 
Finally, we conclude that with positive probability over a random choice of $\ell \in \{-1,1\}^u$ both properties are satisfied, and therefore there exists at least one labeling $\ell$ that satisfies both properties.

We start by constructing the family $\{\D_\ell\}_{\ell \in \{-1,1\}^u}$ of distributions over $\Xs \times \{-1,1\}$. To this end, let $d \le u$ be some constant to be fixed later, and let $\ell \in \{-1,1\}^u$. We define $\D_\ell$ separately for the first $u-d$ points and the last $d$ points of $\Xs$. Intuitively, every point in $\{\xi_i\}_{i \in [u-d]}$ has a fixed label determined by $\ell$, however all points but one have a very small probability of being sampled according to $\D_\ell$. Every point in $\{\xi_i\}_{i \in [u-d,u]}$, on the other hand, has an equal probability of being sampled, however its label is not fixed by $\ell$ rather than slightly biased towards $\ell$. Formally, let $\alpha, \beta, \varepsilon \in [0,1]$ be constants to be fixed later. We construct $\D_\ell$ using the ideas described earlier in Section~\ref{sec:overview}, by sewing them together over two parts of the set $\Xs$. We assign probability $1 - \beta$ to $\{\xi_i\}_{i \in [u-d]}$ and $\beta$ to $\{\xi_i\}_{i \in [u-d+1,u]}$. That is, for $(x,y) \sim \D_\ell$, the probability that $x \in \{\xi_i\}_{i \in [u-d]}$ is $1 - \beta$. Next, conditioned on $x \in \{\xi_i\}_{i \in [u-d]}$, $(\xi_1,\ell_1)$ is assigned high probability $(1-\varepsilon)$ and the rest of the measure is distributed uniformly over $\{(\xi_i,\ell_i)\}_{i \in [2,u-d]}$. That is
\begin{equation*}
\Pr_{\D_\ell}[(\xi_1,\ell_1)] = (1-\beta)(1 - \varepsilon)\;, \;and\; \forall j \in [2,u-d].\;\; \Pr_{\D_\ell}[(\xi_j,\ell_j)] = \frac{(1-\beta)\varepsilon}{u-d-1}  \;.
\end{equation*} 
Finally, conditioned on $x \in \{\xi_i\}_{i \in [u-d+1,u]}$, $x$ distributes uniformly over $\{\xi_i\}_{i \in [u-d+1,u]}$, and conditioned on $x = \xi_i$, we have $y=\ell_i$ with probability $\frac{1+\alpha}{2}$. That is
\begin{equation*}
\forall j \in [u-d+1,u]. \; \; \Pr_{\D_\ell}[(\xi_j,\ell_j)] = \frac{(1+\alpha)\beta}{2d}\;, and\; \Pr_{\D_\ell}[(\xi_j,-\ell_j)] = \frac{(1-\alpha)\beta}{2d}\;.
\end{equation*}

In order to give a lower bound on the generalization error for some classifier $f$ of interest, we define new random variables such that their sum is upper bounded by $\Pr_{(x,y)\sim\D_\ell}[yf(x)<0]$, and give a lower bound on that sum. To this end, for every $\ell \in \{-1,1\}^u$ and $f : \Xs \to \mathbb{R}$, denote 
\begin{equation}
\Psi_1(\ell,f) = \frac{(1-\beta)\varepsilon}{u-d-1}\sum_{i \in [2,u-d]}{\mathbbm{1}_{\ell_if(\xi_i)<0}}\quad ; \quad \Psi_2(\ell,f) = \frac{\alpha\beta}{d}\sum_{i \in [u-d+1,u]}{\mathbbm{1}_{\ell_if(\xi_i)<0}} \;.
\label{eq:PsiDef}
\end{equation}
When $f,\ell$ are clear from the context we shall simply denote $\Psi_1,\Psi_2$. We show next that indeed proving a lower bound on $\Psi_1+\Psi_2$ implies a lower bound on the generalization error.
\begin{claim}\label{c:boundGenPsi}
For every $\ell,f$ we have $\Pr\limits_{(x,y)\sim\D_\ell}[yf(x) < 0] \ge \frac{\beta(1-\alpha)}{2} + \Psi_1 + \Psi_2$.
\end{claim}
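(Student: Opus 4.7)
The plan is to expand $\Pr_{(x,y)\sim\D_\ell}[yf(x) < 0]$ by writing it as a sum over the $2u$ atoms $(\xi_i, \pm 1)$ of $\Xs \times \{-1,1\}$ weighted by the probability each atom gets under $\D_\ell$, and then to split the sum according to the two-block structure used in the construction: the first block $\{\xi_i\}_{i \in [u-d]}$ (fixed-label points) and the second block $\{\xi_i\}_{i \in [u-d+1,u]}$ (biased random-label points). Each block will be shown to contribute the corresponding piece on the right-hand side of the claim.

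For the first block, the only atoms with positive mass are $(\xi_i, \ell_i)$, so the contribution to $\Pr[yf(x)<0]$ is
\[
(1-\beta)(1-\varepsilon)\mathbbm{1}_{\ell_1 f(\xi_1) < 0} \;+\; \sum_{i=2}^{u-d} \frac{(1-\beta)\varepsilon}{u-d-1}\mathbbm{1}_{\ell_i f(\xi_i) < 0}.
\]
Dropping the non-negative $\xi_1$-term leaves exactly $\Psi_1$.

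For the second block, both $(\xi_i, \ell_i)$ and $(\xi_i, -\ell_i)$ carry mass, so for each $i \in [u-d+1,u]$ the contribution is
\[
\frac{(1+\alpha)\beta}{2d}\mathbbm{1}_{\ell_i f(\xi_i) < 0} \;+\; \frac{(1-\alpha)\beta}{2d}\mathbbm{1}_{-\ell_i f(\xi_i) < 0}.
\]
Here I would do a short case analysis on the sign of $\ell_i f(\xi_i)$: when $\ell_i f(\xi_i) > 0$ the expression equals $\frac{(1-\alpha)\beta}{2d}$, when $\ell_i f(\xi_i) < 0$ it equals $\frac{(1+\alpha)\beta}{2d} = \frac{(1-\alpha)\beta}{2d} + \frac{\alpha\beta}{d}$, and in both cases it is at least $\frac{(1-\alpha)\beta}{2d} + \frac{\alpha\beta}{d}\mathbbm{1}_{\ell_i f(\xi_i)<0}$ (the degenerate case $f(\xi_i)=0$ only strengthens the bound for the cross-term but may lose the additive floor, and is handled by the same convention on the indicator as in~\eqref{eq:PsiDef}). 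Summing this lower bound over the $d$ indices gives $\frac{(1-\alpha)\beta}{2} + \Psi_2$.

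Adding the two block contributions yields the claim. The ``main obstacle'' is essentially cosmetic: carefully lining up the arithmetic so the coefficients $\frac{(1+\alpha)\beta}{2d}$ and $\frac{(1-\alpha)\beta}{2d}$ combine into the exact sum $\frac{(1-\alpha)\beta}{2} + \Psi_2$ rather than something weaker. No probabilistic machinery beyond the definition of $\D_\ell$ is needed; the claim is a direct, almost bookkeeping-level, consequence of the construction.
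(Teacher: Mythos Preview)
Your proposal is correct and takes essentially the same approach as the paper: expand $\Pr_{(x,y)\sim\D_\ell}[yf(x)<0]$ over the atoms of $\D_\ell$, split into the two blocks, drop the nonnegative $\xi_1$ term to obtain $\Psi_1$, and for each $i\in[u-d+1,u]$ rewrite the per-point contribution as $\frac{(1-\alpha)\beta}{2d}+\frac{\alpha\beta}{d}\mathbbm{1}_{\ell_if(\xi_i)<0}$ before summing. The paper performs the second-block step as a one-line algebraic identity rather than a case split, and (like you) tacitly ignores the measure-zero edge case $f(\xi_i)=0$, which is harmless for the classifiers actually used downstream.
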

Before getting proving the claim, we explain why focusing on $\Psi_1+\Psi_2$, rather than bounding the generalization error directly is essential for the proof. The reason lies in the fact that we need a lower bound to hold {\em with constant probability} over the choice of $\ell$ and $S$ (and in the case of Theorem~\ref{th:lowerAlg} also the random choices made by the algorithm) and not only {\em in expectation}. While lower bounding $\mathbb{E}[\Pr_{(x,y)\sim\D_\ell}[yf(x) < 0]]$ is clearly not harder than lower bounding $\mathbb{E}[\Psi_1 + \Psi_2]$, showing that a lower bound holds with some constant probability is slightly more delicate. Our proof uses the fact that with probability $1$, $\Psi_1+\Psi_2$ is not larger than a constant from its expectation, and therefore we can use Markov's inequality to lower bound $\Psi_1+\Psi_2$ with constant probability. We next turn to prove the claim.
\begin{proof}
We first observe that 
\begin{equation}
\begin{split}
&\Pr_{(x,y)\sim\D_\ell}[yf(x) < 0] = \mathbb{E}_{(x,y)\sim\D_\ell}[\mathbbm{1}_{yf(x)<0}]\\ 
&= \sum_{i \in [u-d], y \in \{-1,1\}}{\mathbbm{1}_{yf(\xi_i)<0}\Pr_{\D_\ell}[(\xi_i,y)]} + \sum_{i \in [u-d+1,u], y \in \{-1,1\}}{\mathbbm{1}_{yf(\xi_i)<0}\Pr_{\D_\ell}[(\xi_i,y)]} \\
\end{split}
\label{eq:lowerD}
\end{equation}
For every $i \in [u-d]$ and $y \in \{-1,1\}$, if $y \ne \ell_i$ then $\Pr_{\D_y}[(\xi_j,y)]=0$. Moreover, if $i \ge 2$ and $y = \ell_i$ then $\Pr_{\D_y}[(\xi_i,y)]=\frac{(1-\beta)\varepsilon}{u-d-1}$. Therefore
\begin{equation}
\sum_{j \in [u-d], y \in \{-1,1\}}{\mathbbm{1}_{yf(\xi_j)<0}\Pr_{\D_y}[(\xi_j,y)]} \ge \frac{(1-\beta)\varepsilon}{u-d-1}\sum_{j \in [2,u-d]}{\mathbbm{1}_{yf(\xi_j)<0}} = \Psi_1\;.
\label{eq:lowerDFirstSum}
\end{equation}
Next, for every $i \in [u-d+1,u]$ we have that 
\begin{equation*}
\begin{split}
\sum_{y \in \{-1,1\}}{\mathbbm{1}_{yf(\xi_i) < 0}\Pr_{\D_\ell}[(\xi_i,y)]}&=\mathbbm{1}_{\ell_if(\xi_i)<0}\Pr_{\D_\ell}[(\xi_i,\ell_i)] + \mathbbm{1}_{\ell_if(\xi_i)>0}\Pr_{\D_\ell}[(\xi_i,-\ell_i)] \\
&=\frac{(1-\alpha)\beta}{2d} + \mathbbm{1}_{\ell_if(\xi_i)<0}\frac{\alpha\beta}{d} \;,
\end{split}
\end{equation*}
and therefore 
\begin{equation}
\sum_{i \in [u-d+1,u], y \in \{-1,1\}}{\mathbbm{1}_{yf(\xi_i)<0}\Pr_{\D_\ell}[(\xi_i,y)]} = \frac{(1-\alpha)\beta}{2} + \frac{\alpha\beta}{d}\sum_{i \in [u-d+1,u]}{\mathbbm{1}_{\ell_if(\xi_i)<0}} \;.
\label{eq:lowerDSecondSum}
\end{equation}
Plugging \eqref{eq:lowerDFirstSum} and \eqref{eq:lowerDSecondSum} into \eqref{eq:lowerD} we conclude the claim.
\end{proof}

To prove existence of a "rich" yet small enough hypothesis set $\Hsp$ we apply Lemma~\ref{l:hypoDist} together with Yao's minimax principle. In order to ensure that the hypothesis sets constructed using Lemma~\ref{l:hypoDist} is small enough, and specifically has size $N^{O(1)}$, we need to focus our attention on sparse labelings $\ell \in \{-1,1\}^u$ only. That is, the labelings cannot contain more than $\Theta\left(\frac{\ln N}{\theta^2}\right)$ entries equal to $-1$. To this end we will focus on $2d$-sparse vectors, and more specifically, a designated set of $2d$-sparse labelings. More formally, we define a set of labelings of interest $\L(u,d)$ as the set of all labelings $\ell \in \{-1,1\}^u$ such that the restriction to the first $u-d$ entries is $d$-sparse. That is
\begin{equation}
\L(u,d) \eqdef \{\ell \in \{-1,1\}^u : |\{i \in [u-d]: \ell_i = -1\}|\le d\}\;.
\label{eq:labelingSet}
\end{equation}
We next show that there exists a small enough (with respect to $N$) hypothesis set $\Hsp$ that is rich enough. That is, with high probability over $\ell \in \L(u,d)$, there exists a voting classifier $f \in C(\Hsp)$ that attains high minimum margin with $\ell$ over the entire set $\Xs$. Note that the following result, similarly to Lemma~\ref{l:hypoDist} does not depend on the size of $\Xs$, but only on the sparsity of the labelings in question.
\begin{claim} \label{c:HspExistsEx}
If $u \le 2^{N^{O(1)}}$ and $d \le \frac{\ln N}{\theta^2}$ then there exists a hypothesis set $\Hsp$ satisfying that $\ln|\Hsp| = \Theta\left(\ln N\right)$ and 
$$\Pr_{\ell \in_R \L(u,d)}[\exists f \in \C(\Hsp) : \forall i \in [u]. \; \ell_if(\xi_i) \geq \theta] \geq 1-1/N \;.$$ 
\end{claim}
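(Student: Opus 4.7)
The plan is to derive Claim~\ref{c:HspExistsEx} as a fairly direct consequence of Lemma~\ref{l:hypoDist} combined with an averaging argument in the spirit of Yao's minimax principle. The first step is to observe that every labeling $\ell \in \L(u,d)$ is $2d$-sparse as a vector in $\{-1,+1\}^u$: by the definition \eqref{eq:labelingSet} at most $d$ entries are $-1$ among the first $u-d$ coordinates, and trivially at most $d$ entries are $-1$ among the remaining $d$ coordinates, for a total of at most $2d$ entries equal to $-1$. Hence Lemma~\ref{l:hypoDist} applies to every $\ell \in \L(u,d)$ with sparsity parameter $2d$.

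Next I would instantiate Lemma~\ref{l:hypoDist} with sparsity $2d$ and confidence $\delta = 1/N$. The lemma furnishes a distribution $\mu$ over hypothesis sets $\H \subset \Xs \to \{-1,1\}$ of size $N' = \Theta\bigl(\theta^{-2}\ln u \cdot \ln(\theta^{-2}\ln u / \delta) \cdot e^{\Theta(\theta^2 d)}\bigr)$ such that for every \emph{fixed} $\ell \in \L(u,d)$,
\[
\Pr_{\H \sim \mu}\bigl[\exists f \in C(\H):\ \forall x \in \Xs,\ \ell(x)f(x) \ge \theta\bigr] \;\ge\; 1 - 1/N.
\]
I would then verify that $\ln N' = \Theta(\ln N)$ under the hypotheses of the claim: since $\theta > 1/N$ we have $\ln(\theta^{-2}) \le 2\ln N$; since $u \le 2^{N^{O(1)}}$ we have $\ln u \le N^{O(1)}$, hence $\ln\ln u = O(\ln N)$ and similarly $\ln(\theta^{-2}\ln u/\delta) = O(\ln N)$; and since $d \le \theta^{-2}\ln N$ we have $\theta^2 d \le \ln N$, so the factor $e^{\Theta(\theta^2 d)}$ is $N^{O(1)}$. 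Combining these estimates yields $\ln N' = O(\ln N)$, and we may pad $\H$ with arbitrary hypotheses so as to obtain $\ln|\H| = \Theta(\ln N)$ exactly.

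To pass from a distribution $\mu$ over hypothesis sets to a single $\Hsp$ I would simply exchange the order of expectations. Let $A(\H,\ell)$ be the indicator that there exists $f \in C(\H)$ with $\ell_i f(\xi_i) \ge \theta$ for all $i \in [u]$. Lemma~\ref{l:hypoDist} says $\E_{\H \sim \mu}[A(\H,\ell)] \ge 1 - 1/N$ for every $\ell \in \L(u,d)$, so averaging over a uniform random $\ell \in \L(u,d)$ gives
\[
\E_{\H \sim \mu}\Bigl[\Pr_{\ell \in_R \L(u,d)}[A(\H,\ell) = 1]\Bigr] \;\ge\; 1 - 1/N.
\]
In particular there exists at least one $\Hsp \in \supp(\mu)$ for which $\Pr_{\ell \in_R \L(u,d)}[A(\Hsp,\ell) = 1] \ge 1 - 1/N$, which is exactly the conclusion of the claim.

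I do not anticipate a serious obstacle here; the only moving part is the parameter bookkeeping in the second paragraph, specifically confirming that the sparsity bound $d \le \theta^{-2}\ln N$ is exactly what is needed to keep the exponential factor $e^{\Theta(\theta^2 d)}$ polynomially bounded in $N$, and that the upper bound $u \le 2^{N^{O(1)}}$ is exactly what is needed to keep $\ln\ln u = O(\ln N)$. Both constraints match the hypotheses of the claim with room to spare, so the calculation will go through.
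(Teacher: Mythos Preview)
Your proposal is correct and follows essentially the same approach as the paper: instantiate Lemma~\ref{l:hypoDist} with $\delta = 1/N$, apply an averaging (Yao-style) argument to extract a fixed $\Hsp$ from the distribution $\mu$, and then verify that the hypotheses $u \le 2^{N^{O(1)}}$ and $d \le \theta^{-2}\ln N$ force $\ln|\Hsp| = \Theta(\ln N)$. If anything you are slightly more careful than the paper, which invokes $\mu(u,d,\theta,1/N)$ with sparsity parameter $d$ even though labelings in $\L(u,d)$ can have up to $2d$ negative entries; your use of $2d$ fixes this harmlessly (the constant is absorbed into the $\Theta(\theta^2 d)$ exponent).
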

\begin{proof}
Let $\mu = \mu(u,d,\theta, 1/N)$, be the distribution whose existence is guaranteed in Lemma~\ref{l:hypoDist}. Then for every labeling $\ell \in \L(u,d)$, with probability at least $99/100$ over $\H \sim \mu$, there exists a voting classifier $f \in C(\H)$ that has minimal margin of $\theta$. That is, for every $i \in [u]$, $\ell_i f(\xi_i) \ge \theta$. By Yao's minimax principle, there exists a hypothesis set $\Hsp \in \supp(\mu)$ such that 
\begin{equation*}
\Pr_{\ell \in_R \L(u,d)}[\exists f \in \C(\Hsp) : \forall i \in [u]. \; \ell_if(x_i) \geq \theta] \geq 1-1/N \;.
\end{equation*}
Moreover, since $\Hsp \in \supp(\mu)$, then $|\Hsp| = \Theta\left(\theta^{-2}\ln u \cdot \ln(N\theta^{-2}\ln u)\cdot e^{\Theta(\theta^2 d)}\right)$. Since $\theta \ge 1/N$, since $u \le 2^{N^{O(1)}}$ and since $d \le \frac{\ln N}{\theta^2}$ and thus $e^{\theta^2 d} \le N$ we get that there exists some universal constant $C>0$ such that $|\Hsp| = \Theta(N^C)$, and thus $\ln |\Hsp| = \Theta(\ln N)$.
\end{proof}

\subsection{Proof Algorithmic Lower Bound} \label{sec:algoLowerBound}
This section is devoted to the proof of Theorem~\ref{th:lowerAlg}. That is, we show that for every algorithm $\A$, there exist some distribution $\D \in \{\D_\ell\}_{\ell \in \{-1,1\}^u}$ and some classifier $\f \in C(\Hsp)$ such that with constant probability over $S \sim \D^m$, $\f$ has large margins on points in $S$, yet $f_{\A,S}$ has large generalization error. 
To this end we now fix $u$ to be $\frac{2\ln N}{\theta^2}$ and $d = \frac{u}{2} = \frac{\ln N}{\theta^2}$. For these values of $u,d$ we get that $\L(u,d)$ is, in fact, the set of all possible labelings, i.e. $\L(u,d) = \{-1,1\}^u$.
Next, fix $\A$ be a (perhaps randomized) learning algorithm. 
For every $m$-point sample $S$ and recall that $f_{\A,S}$ denotes the classifier returned by $\A$ when running on sample $S$. 

The main challenge is to show that there exists a labeling $\ellsp \in \{-1,1\}^u$ such that $C(\Hsp)$ contains a good voting classifier for $\ellsp$ and, in addition, $f_{\A,S}$ has a large generalization error with respect to $\D_{\ellsp}$. We will show that if $\alpha$ is small enough, then indeed such a labeling exists. Formally, we show the following.
\begin{lemma}\label{l:existsLabelAlg}
If $\alpha \le \sqrt{\frac{u}{40\beta m}}$, then there exists $\ellsp \in \{-1,1\}^u$ such that 
\begin{enumerate}
	\item There exists $\f = \f_{\ellsp} \in \C(\Hsp)$ such that for every $i \in [u]$, $\ellsp_i\f(\xi_i) \geq \theta$ ; and \
	\item with probability at least $1/25$ over $S\sim\D_{\ellsp}^m$ and the randomness of $\A$ we have
$$\Pr_{(x,y)\sim\D_{\ellsp}}\left[yf_{\A,S}(x) < 0\right] \ge \frac{(1-\alpha)\beta}{2} + \frac{(1-\beta)\varepsilon}{24} + \frac{\alpha\beta}{24} \;.$$
\end{enumerate}
\end{lemma}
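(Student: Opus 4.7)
I apply the probabilistic method by sampling $\ell \in \{-1,1\}^u$ uniformly at random; since $u = 2d$ we have $\L(u,d) = \{-1,1\}^u$, so the uniform distribution is the natural one. The goal is to show that both conditions of the lemma hold simultaneously with positive probability over $\ell$, establishing existence of $\ellsp$. Condition 1 is immediate from Claim~\ref{c:HspExistsEx}: a $(1-1/N)$-fraction of labelings admit a voting classifier $\f \in C(\Hsp)$ whose minimum margin with $\ell$ is at least $\theta$. By Claim~\ref{c:boundGenPsi}, Condition 2 reduces to proving $\Psi_1 + \Psi_2 \geq (1-\beta)\varepsilon/24 + \alpha\beta/24$ with constant probability over $\ell, S, \A$, which I attack via expected-value bounds followed by a reverse-Markov argument.

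For $\Psi_1$, fix $i \in [2, u-d]$. The $\Xs$-marginal of $\D_\ell$ does not depend on $\ell$, so $\Pr[\xi_i \notin S] = (1-p)^m$ with $p = (1-\beta)\varepsilon/(u-d-1)$. Conditioned on $\xi_i \notin S$, every label in $S$ depends only on $\{\ell_j\}_{j\neq i}$, hence $S$ is independent of $\ell_i$; since $\ell_i$ is uniform on $\{\pm 1\}$ and independent of $\A$'s internal randomness, $\sign(f_{\A,S}(\xi_i))$ matches $\ell_i$ with probability exactly $1/2$. Summing over the $u-d-1$ indices gives $\mathbb{E}[\Psi_1] \geq \tfrac{(1-\beta)\varepsilon}{2}(1-p)^m$, which is a constant fraction of $(1-\beta)\varepsilon$ whenever $pm = O(1)$. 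For $\Psi_2$, fix $i \in [u-d+1, u]$ and let $k_i \sim \mathrm{Bin}(m, \beta/d)$ be the number of copies of $\xi_i$ in $S$. Conditioned on $k_i$, the only piece of $S$ coupled with $\ell_i$ is $k_i$ i.i.d.\ Bernoulli draws with bias $(1+\alpha\ell_i)/2$; a standard Hellinger (or Le~Cam) estimate bounds the total variation distance between the two hypotheses by $\alpha\sqrt{k_i}$, so any predictor of $\ell_i$ based on $S$ errs with probability at least $\tfrac{1}{2}(1 - \alpha\sqrt{k_i})$. Averaging over $k_i$ and using Jensen gives $\mathbb{E}[\Psi_2] \geq \tfrac{\alpha\beta}{2}\bigl(1 - \alpha\sqrt{m\beta/d}\bigr)$; the hypothesis $\alpha \leq \sqrt{u/(40\beta m)}$ combined with $u = 2d$ forces $\alpha\sqrt{m\beta/d} \leq 1/\sqrt{20}$, so $\mathbb{E}[\Psi_2]$ is a constant fraction of $\alpha\beta$.

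Since $\Psi_1 \leq (1-\beta)\varepsilon$ and $\Psi_2 \leq \alpha\beta$ almost surely, a reverse-Markov estimate applied to the bounded random variable $\Psi_1 + \Psi_2$ upgrades these expected-value bounds to $\Pr_{\ell,S,\A}[\Psi_1 + \Psi_2 \geq (1-\beta)\varepsilon/24 + \alpha\beta/24] \geq p_0$ for some absolute constant $p_0 > 0$. A further Markov-type argument on the conditional probability $p(\ell) = \Pr_{S,\A}[\,\cdot \mid \ell\,]$ shows that a constant fraction of labelings satisfy $p(\ell) \geq 1/25$; intersecting this set with the $(1-1/N)$-fraction of $\ell$'s on which Condition 1 holds yields, for $N$ large enough, at least one $\ellsp$ witnessing both conditions. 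The main obstacle is the quantitative bookkeeping in this final combination: the reverse-Markov step is lossy, so the expectations of $\Psi_1$ and $\Psi_2$ must be genuine constant fractions of their almost-sure maxima, which in turn requires the Hellinger estimate for $\Psi_2$ to use the assumption $\alpha \leq \sqrt{u/(40\beta m)}$ tightly, and requires $\varepsilon$ (implicitly set by the calling theorem) to be small enough that the no-collision event for $\Psi_1$ has constant probability.
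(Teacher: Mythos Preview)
Your proposal is correct and follows essentially the same route as the paper: lower-bound $\mathbb{E}[\Psi_1]$ via an unsampled-point independence argument and $\mathbb{E}[\Psi_2]$ via a two-point testing bound (the paper cites Anthony--Bartlett's Lemma~5.1, which is exactly the Le~Cam/Hellinger estimate you sketch), then convert to constant-probability statements by reverse Markov using the almost-sure cap $\Psi_1+\Psi_2\le(1-\beta)\varepsilon+\alpha\beta$, and finally intersect with Claim~\ref{c:HspExistsEx}. The only cosmetic differences are that the paper applies the two reverse-Markov steps in the opposite order (first over $\ell$ to the conditional expectation $\mathbb{E}_{\A,S}[\Psi_1+\Psi_2]$, then over $(\A,S)$), and for $\Psi_1$ it conditions on the global event $|S\cap\{\xi_2,\dots,\xi_{u-d}\}|\le(u-d-1)/2$ rather than on your pointwise event $\xi_i\notin S$; your observation that the $\Psi_1$ bound tacitly needs $\varepsilon=O(u/m)$ is accurate and equally present in the paper's Chernoff step.
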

Before proving the lemma, we first show how it implies Theorem~\ref{th:lowerAlg}

\begin{proof}[Proof of Theorem~\ref{th:lowerAlg}]
Fix some $\tau \in [0,49/100]$. Assume first that $\tau \le \frac{u}{300m}$ , and let $\varepsilon = \frac{u}{10m}$ and $\beta = \alpha = 0$. Let $\ellsp, \f$ be as in Lemma~\ref{l:existsLabelAlg}, then for every sample $S \sim \D_{\ellsp}^m$, $\Pr_{(x,y)\sim S}[y\f(x) < \theta]=0 \le \tau$, and moreover with probability at least $1/25$ over $S$ and the randomness of $\A$ 
\begin{equation*}
\Pr_{(x,y)\sim\D_{\ellsp}}[yf_{\A,S}(x) < 0 ] \ge \frac{(1-\beta)\varepsilon}{24} \ge \tau + \Omega\left(\frac{u}{m} \right) = \tau + \Omega\left(\frac{\ln|\Hsp|}{m\theta^2} + \sqrt{\frac{\tau \ln|\Hsp|}{m\theta^2}}\right) \;.
\end{equation*}
where the last transition is due to the fact that $u = 2\theta^{-2}\ln N = \Theta(\theta^{-2}\ln |\Hsp|)$ and $\tau = O(u/m)$.

Otherwise, assume $\tau > \frac{u}{300m}$ , and let $\varepsilon = \frac{u}{10m}$, $\alpha = \sqrt{\frac{u}{2560\tau m}}$ and $\beta = \frac{64\tau}{32-31\alpha}$. Since $\tau \ge \frac{u}{300m}$, then $\alpha \in [0,1]$. Moreover, if $m>Cu$ for large enough but universal constant $C>0$, then $32 - 31\alpha \ge 64 \cdot \frac{49}{100} \ge 64 \tau$, and hence $\beta \in [0,1]$. Moreover, since $\alpha \le 1$ then $\beta \le 64\tau$, and therefore $\alpha = \sqrt{\frac{u}{2560 \tau m}} \le \sqrt{\frac{u}{40\beta m}}$. Let therefore $\ellsp, \f$ be a labeling and a classifier in $C(\Hsp)$ whose existence is guaranteed in Lemma~\ref{l:existsLabelAlg}.
Let $\left\langle(x_1,y_1), \ldots, (x_m,y_m)\right\rangle \sim \D_{\ellsp}^m$ be a sample of $m$ points drawn independently according to $\D_{\ellsp}$. For every $j \in [m]$, we have $\mathbb{E}[\mathbbm{1}_{y_j\hat{f}(x_j) < \theta}] = \frac{(1-\alpha)\beta}{2}$. Therefore by Chernoff we get that for large enough $N$, 
\begin{equation*}
\begin{split}
\Pr_{S\sim \D_{\ellsp}^m}\left[ \Pr_{(x,y)\sim S}\left[y\f(x) < \theta\right] \ge \tau\;\right] &= \Pr_{S\sim \D_{\ellsp}^m}\left[\frac{1}{m} \sum_{j \in [m]}{\mathbbm{1}_{\hat{y}_j\hat{f}(x_j) < \theta}} \ge \frac{(1-31\alpha/32)\beta}{2}\;\right] \\
&\le e^{- \Theta\left(\alpha^2\beta m \right)} \le e^{-\Theta(u)} \le 10^{-3} \;, \\
\end{split}
\end{equation*}
where the inequality before last is due to the fact that $\alpha^2 \beta m = \frac{u \beta}{2560 \tau} = \Omega(u)$, since $\beta \ge 2\tau$. Moreover, by Lemma~\ref{l:existsLabelAlg} we get that with probability at least $1/25$ over $S$ and $\A$ we get that

\begin{equation*}
\begin{split}
\Pr_{(x,y)\sim\D_{\ellsp}}[yf_{\A,S}(x) < 0 ] &\ge \frac{(1-\alpha)\beta}{2} + \frac{\alpha\beta}{32} = \frac{(1-31\alpha/32)\beta}{2} + \frac{\alpha\beta}{64} = \tau + \Omega\left(\sqrt{\frac{\tau u}{m}}\right)\\
&\ge \tau + \Omega\left(\frac{\ln|\Hsp|}{m\theta^2} + \sqrt{\frac{\tau \ln|\Hsp|}{m\theta^2}}\right) \;,
\end{split}
\end{equation*}
where the last transition is due to the fact that $\tau = \Omega(u/m)$. This completes the proof of Theorem~\ref{th:lowerAlg}.
\end{proof}

For the rest of the section we therefore prove Lemma~\ref{l:existsLabelAlg}. We start by lower bounding the expected value of $\Psi_1+\Psi_2$, where the expectation is over the choice of labeling $\ell \in \{-1,1\}^u$, $S \sim \D_\ell^m$ and the random choices made by $\A$. Intuitively, as points in $\{\xi_2,\ldots,\xi_u\}$ are sampled with very small probability, it is very likely that the sample $S$ does not contain many of them, and therefore the algorithm cannot do better than randomly guessing many of the labels. 
Moreover, if $\alpha$ is small enough, and $S$ does not sample a point in $\{\xi_{u/2+1},\ldots,\xi_{u}\}$ enough times, there is a larger probability that $\A$ does not determine the bias correctly.

\begin{claim} \label{c:lowerSumPsi12Alg}
If $\alpha \le \sqrt{\frac{u}{40\beta m}}$, then $\mathbb{E}_{\ell \in \{-1,1\}^u}\left[\mathbb{E}_{\A,S}\left[\Psi_1(\ell,f_{\A,S}) + \Psi_2(\ell,f_{\A,S})\right]\;\right]\ge \frac{(1-\beta)\varepsilon}{6} + \frac{\alpha\beta}{6}$.
\end{claim}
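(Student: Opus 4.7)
The plan is to bound $\mathbb{E}[\Psi_1]$ and $\mathbb{E}[\Psi_2]$ separately, using two different principles: a symmetry / pairing argument for $\Psi_1$ that exploits the fact that points in $\{\xi_2,\ldots,\xi_{u-d}\}$ are unlikely to be sampled, and an information-theoretic argument for $\Psi_2$ that exploits the fact that the bias $\alpha$ is too small relative to the sample count for any algorithm to reliably recover $\ell_i$.

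\textbf{Step 1: $\mathbb{E}[\Psi_1]\ge (1-\beta)\varepsilon/6$.} Fix $i\in[2,u-d]$ and let $E_i$ denote the event $\xi_i\notin S$. Since the marginal of $\D_\ell$ on $\Xs$ does not depend on $\ell$, $\Pr[E_i]$ is independent of $\ell$, and a union bound gives $\Pr[E_i] \ge 1 - m\cdot\frac{(1-\beta)\varepsilon}{u-d-1}$. In the parameter regime used in the theorem proof ($\varepsilon=u/(10m)$, $d=u/2$), this is at least some absolute constant, say $2/3$. Conditioning on $E_i$, the key observation is that the joint distribution of $(S, r_{\A})$ is invariant under the bit-flip $\ell_i\mapsto -\ell_i$: the labels of $S$ at positions $\xi_j\neq\xi_i$ depend only on $\ell_j$'s with $j\neq i$, and the algorithm's randomness $r_{\A}$ is independent of $\ell$. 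Since the marginal of $\ell_i$ is uniform on $\{-1,1\}$, a direct symmetry argument yields
\[
\mathbb{E}\bigl[\mathbbm{1}_{\ell_i f_{\A,S}(\xi_i)<0}\,\big|\,E_i\bigr] \;=\; \tfrac{1}{2}\Pr\bigl[f_{\A,S}(\xi_i)\neq 0\,\big|\,E_i\bigr],
\]
which is essentially $1/2$, the residual from $\{f_{\A,S}(\xi_i)=0\}$ being absorbed either by an arbitrarily small random perturbation of $\A$ or into the slack between the bound of $(1-\beta)\varepsilon/3$ and the target $(1-\beta)\varepsilon/6$. Summing the contributions of the $u-d-1$ indices gives $\mathbb{E}[\Psi_1]\ge (1-\beta)\varepsilon\cdot(2/3)\cdot(1/2) = (1-\beta)\varepsilon/3$.

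\textbf{Step 2: $\mathbb{E}[\Psi_2]\ge \alpha\beta/6$.} Fix $i\in[u-d+1,u]$ and let $K_i\sim \mathrm{Bin}(m,\beta/d)$ be the number of times $\xi_i$ appears in $S$; conditionally on $K_i$, the corresponding labels are i.i.d.\ equal to $\ell_i$ with probability $(1+\alpha)/2$. Since the sampling positions in $S$ are independent of $\ell$ and labels at other positions $\xi_j$ depend only on $\ell_j$, all information about $\ell_i$ available to $\A$ lives in those $K_i$ labels. A direct calculation gives $\mathrm{KL}(\mathrm{Ber}(\tfrac{1+\alpha}{2})\,\|\,\mathrm{Ber}(\tfrac{1-\alpha}{2}))=\alpha\ln\tfrac{1+\alpha}{1-\alpha}\le 4\alpha^2$ for $\alpha\le 1/2$, so by tensorization and Pinsker, the total variation distance between the label distributions under $\ell_i=+1$ versus $\ell_i=-1$ is at most $\alpha\sqrt{2K_i}$. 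Since no randomized test outperforms the Bayes optimum, combining with Jensen's inequality yields
\[
\mathbb{E}\bigl[\mathbbm{1}_{\ell_i f_{\A,S}(\xi_i)<0}\bigr] \;\ge\; \tfrac{1}{2} - \tfrac{\alpha}{\sqrt{2}}\,\mathbb{E}[\sqrt{K_i}] \;\ge\; \tfrac{1}{2} - \tfrac{\alpha}{\sqrt{2}}\sqrt{m\beta/d}.
\]
The assumption $\alpha\le\sqrt{u/(40\beta m)}=\sqrt{d/(20\beta m)}$ (using $d=u/2$) gives $\alpha\sqrt{m\beta/d}\le 1/\sqrt{20}$, so the bound is at least $1/2-1/\sqrt{40}>1/4$. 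Summing the $d$ indices gives $\mathbb{E}[\Psi_2]\ge \alpha\beta/4\ge \alpha\beta/6$, completing the claim.

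\textbf{Main obstacle.} The technically delicate part is formalizing the information-theoretic reduction in Step 2: one has to argue that $\A$'s probability of misidentifying the sign of $\ell_i$ from the entire sample $S$ is no better than that of a Bayes-optimal two-sample test based solely on the $K_i$ relevant labels, despite $\A$ also observing the positions and labels of samples at the other $\xi_j$'s. The justification requires conditioning on the sampling positions (which are a function of $\ell$-independent randomness), using the product structure of the conditional labeling distribution, and a data-processing style inequality to discard the irrelevant coordinates. The corner case $f_{\A,S}(\xi_i)=0$ in Step 1 is a comparatively minor subtlety that is easily handled by the slack built into our constants.
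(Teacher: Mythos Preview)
Your proposal is correct and follows the same two-part decomposition as the paper. The treatment of $\Psi_1$ is essentially identical to the paper's: both exploit that if $\xi_i\notin S$ then $\ell_i$ is independent of $f_{\A,S}(\xi_i)$ and hence the conditional misclassification probability is $1/2$. You bound $\Pr[\xi_i\notin S]$ per index via a union bound, whereas the paper instead bounds the total count $|S\cap\{\xi_2,\ldots,\xi_{u-d}\}|$ by Chernoff and conditions on the high-probability event that at most half the points are hit; these are cosmetically different packagings of the same idea, and both implicitly use the parameter choice $\varepsilon=u/(10m)$ fixed later in the proof of Theorem~\ref{th:lowerAlg}.

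For $\Psi_2$ you take a genuinely different route. The paper invokes Anthony--Bartlett's Lemma~5.1, which gives a closed-form lower bound $\Phi(n,\alpha)=\tfrac14\bigl(1-\sqrt{1-\exp(-n\alpha^2/(1-\alpha^2))}\bigr)$ on the error of any sign estimator from $n$ biased coin flips, and then applies Jensen to the convex function $\Phi(\cdot,\alpha)$ to pass from the random sample count $\sigma_i$ to its mean $2\beta m/u$. You instead bound the total variation between the two label distributions via Pinsker and the explicit KL computation, obtaining $\mathrm{TV}\le\alpha\sqrt{2K_i}$, and then use Le Cam's two-point lemma plus Jensen on $\sqrt{\cdot}$. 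Your argument is more elementary and self-contained (no external lemma needed), at the cost of slightly looser constants; the paper's route gives a sharper per-count bound but requires citing the specific Slud-type inequality. Both land comfortably above the target $\alpha\beta/6$.

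The tie case $f_{\A,S}(\xi_i)=0$ that you flag is present in the paper's argument as well (it writes $\mathbb{E}_\ell[\mathbbm{1}_{\ell_i f_{\A,S}(\xi_i)<0}]=1/2$ without comment), so you are not missing anything the paper supplies; the standard resolution is random tie-breaking or absorbing it into the constants, as you note.
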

\begin{proof}
To lower bound the expectation, we lower bound the expectations of $\Psi_1$ and $\Psi_2$ separately. 
For every $i \in [2,u-d] \setminus \{1\}$, if $\xi_i \notin S$ then $\ell_i$ and $f_{\A,S}(\xi_i)$ are independent, and therefore $\mathbb{E}_\ell[\mathbbm{1}_{\ell_if_{\A,S}(\xi_i)<0}]= \frac{1}{2}$. Let ${\cal S}$ be the set of all samples for which $| S \cap \{\xi_2,\ldots,\xi_{u-d}\}| \le \frac{u-d-1}{2}$, then for every $S \in {\cal S}$,
\begin{equation*}
\begin{split}
\mathbb{E}_{\ell}\left[\sum_{i \in [2,u-d-1]}{\mathbbm{1}_{\ell_if_{\A,S}(\xi_i)<0}}\right] 
\ge \frac{u-d-1 - |S \cap \{\xi_2,\ldots,\xi_{u-d}\}|}{2} \ge \frac{u-d-1}{4}\;,
\end{split}
\end{equation*}
As this holds for every $S \in {\cal S}$, we conclude that
\begin{equation*}
\mathbb{E}_{\A,S}\left[\left.\mathbb{E}_{\ell}\left[\Psi_1(\ell,f_{\A,S})\right]\; \right|S \in {\cal S}\;\right] \ge \frac{(1-\beta)\varepsilon}{u-d-1}\cdot\frac{u-d-1}{4} = \frac{(1-\beta)\varepsilon}{4}\; .
\end{equation*} 
Next, for large enough $N$ a Chernoff bound gives $\Pr_{S \sim \D^m}[{\cal S}] \ge 1- e^{-\Theta(u)} \ge 2/3$, and therefore $\mathbb{E}_{\A,S}\left[\mathbb{E}_{\ell}\left[\Psi_1(\ell,f_{\A,S})\right]\;\right]\ge \frac{(1-\beta)\varepsilon}{6}$, 
and by Fubini's theorem $\mathbb{E}_{\ell}\left[\mathbb{E}_{\A,S}[\Psi_1(\ell,f_{\A,S})]\right] \ge \frac{(1-\beta)\varepsilon}{6}$.

Next, let $i \in [u-d+1,u]$. Denote by $\sigma_i \in [m]$ the number of times $\xi_i$ was sampled into $S$. Then
\begin{equation}
\mathbb{E}_{\ell}\left[\mathbb{E}_{\A,S}\left[\mathbbm{1}_{\ell_if_{\A,S}(\xi_i)<0}\right]\right]=\sum_{n=0}^m{\mathbb{E}_{\ell}\left[\mathbb{E}_{\A,S}\left[\left.\mathbbm{1}_{\ell_if_{\A,S}(\xi_i)<0}\right| \sigma_i=n\right]\right]}\cdot \Pr[\sigma_i=n]
\label{eq:totalExp}
\end{equation}
For every $x > 0$ and $y \in (0,1)$, let $\Phi(x,y) = \frac{1}{4}\left(1 - \sqrt{1-\exp\left(\frac{-xy^2}{1 - y^2}\right)}\right)$, then a result by Anthony and Bartlett~\cite[Lemma~5.1]{AB09} shows that
\begin{equation*}
\mathbb{E}_{\ell}\left[\mathbb{E}_{\A,S}\left[\left.\mathbbm{1}_{\ell_if_{\A,S}(\xi_i)<0}\right| \sigma_i=n\right]\right] \ge \Phi(n+2,\alpha)
\end{equation*}
Plugging this into \eqref{eq:totalExp}, by the convexity of $\Phi(\cdot, \alpha)$ and Jensen's inequality we get that
\begin{equation*}
\mathbb{E}_{\ell}\left[\mathbb{E}_{\A,S}\left[\mathbbm{1}_{\ell_if_{\A,S}(\xi_i)<0}\right]\right]\ge\sum_{n=0}^m{\Phi(n+2,\alpha)}\cdot \Pr[\sigma_i=n] \ge \Phi(\mathbb{E}[\sigma_i]+2,\alpha)\;.
\end{equation*}
Since $\mathbb{E}[\sigma_i] = \frac{2\beta m}{u}$, and Since $\Phi(\cdot,\alpha)$ is monotonically decreasing we get that 
\begin{equation*}
\mathbb{E}_{\ell}\left[\mathbb{E}_{\A,S}\left[\mathbbm{1}_{\ell_if_{\A,S}(\xi_i)<0}\right]\right] \ge \Phi\left(\frac{4\beta m}{u},\alpha\right) \;.
\end{equation*}
Summing over all $i \in [u-d+1,u]$ we get that $\mathbb{E}_{\ell}\left[\mathbb{E}_{\A,S}[\Psi_2(\ell,f_{\A,S})]\right] \ge \alpha\beta \Phi\left(\frac{4\beta m}{u},\alpha\right)$. The claim then follows from the fact that for every $\alpha \le \sqrt{\frac{u}{40 \beta m}}$ we have $\Phi(\frac{8 \beta m}{u}, \alpha) \ge \frac{1}{6}$.
\end{proof}

We next show that for small values of $\alpha$, a large fraction of labelings $\ell \in \{-1,1\}^u$ satisfy that $\Psi_1 + \Psi_2$ is large with some positive constant probability over the random choices of $\A$ and the choice of $S \in {\cal S}$.

\begin{claim} \label{c:lowerBoundExpAlgWHP}
If $\alpha \le \sqrt{\frac{u}{40\beta m}}$, then with probability at least $1/11$ over the choice of $\ell \in \{-1,1\}^u$ we have  
\begin{equation*}
\Pr_{\A,S}\left[\Psi_1(\ell,f_{\A,S}) + \Psi_2(\ell,f_{\A,S}) \ge \frac{(1-\beta)\varepsilon}{24} + \frac{\alpha \beta}{24} \; \right] \ge \frac{1}{25} \;.
\end{equation*}
\end{claim}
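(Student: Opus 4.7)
The proof plan is essentially two applications of the reverse Markov (Paley--Zygmund--style) inequality: one over the choice of $\ell$ to identify a large set of ``good'' labelings, and a second one over the internal randomness of $\A$ and $S$ conditioned on such an $\ell$. The key preliminary observation I would make is that $\Psi_1 + \Psi_2$ is almost surely bounded, with the bound proportional to the lower bound on its expectation supplied by Claim~\ref{c:lowerSumPsi12Alg}. Concretely, since $\Psi_1$ is a sum of $u-d-1$ indicators weighted by $\frac{(1-\beta)\varepsilon}{u-d-1}$ and $\Psi_2$ is a sum of $d$ indicators weighted by $\frac{\alpha\beta}{d}$, we have the almost sure bounds $\Psi_1 \le (1-\beta)\varepsilon$ and $\Psi_2 \le \alpha\beta$. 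Set $M \eqdef (1-\beta)\varepsilon + \alpha\beta$, and note that the target threshold in the claim equals $M/24$, while the lower bound from Claim~\ref{c:lowerSumPsi12Alg} equals $M/6$.

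Next I would recall the reverse Markov inequality: for a nonnegative random variable $X \le M$ almost surely with $\mathbb{E}[X] \ge \mu$ and any $a < \mu$, one has
\begin{equation*}
\Pr[X > a] \;\ge\; \frac{\mu - a}{M - a},
\end{equation*}
which follows by writing $\mu \le a + (M-a)\Pr[X > a]$.

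I would then apply this twice. \emph{First}, let $Y(\ell) \eqdef \mathbb{E}_{\A,S}[\Psi_1(\ell,f_{\A,S}) + \Psi_2(\ell,f_{\A,S})]$. Then $0 \le Y(\ell) \le M$ and by Claim~\ref{c:lowerSumPsi12Alg} and Fubini $\mathbb{E}_\ell[Y(\ell)] \ge M/6$. Applying reverse Markov with $a = M/12$ gives
\begin{equation*}
\Pr_\ell\!\left[Y(\ell) > M/12\right] \;\ge\; \frac{M/6 - M/12}{M - M/12} \;=\; \frac{1}{11}.
\end{equation*}
\emph{Second}, fix any $\ell$ with $Y(\ell) > M/12$. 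Conditioned on this $\ell$, the random variable $X \eqdef \Psi_1(\ell,f_{\A,S})+\Psi_2(\ell,f_{\A,S})$ (random over $\A,S$) still satisfies $0 \le X \le M$ and $\mathbb{E}[X \mid \ell] > M/12$. A second application of reverse Markov with $a = M/24$ yields
\begin{equation*}
\Pr_{\A,S}\!\left[X \ge M/24 \,\middle|\, \ell\right] \;\ge\; \frac{M/12 - M/24}{M - M/24} \;=\; \frac{1}{23} \;\ge\; \frac{1}{25}.
\end{equation*}
Since $M/24 = \frac{(1-\beta)\varepsilon}{24} + \frac{\alpha\beta}{24}$, combining the two steps gives the claim.

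The argument contains no real obstacle beyond correctly identifying the bounding constant $M$ and verifying that the expectation lower bound $M/6$, the intermediate threshold $M/12$, and the target threshold $M/24$ line up so that both applications of reverse Markov produce absolute constants. The mild subtlety is purely bookkeeping: one must use the fact from Claim~\ref{c:lowerSumPsi12Alg} that the \emph{total} expectation over $\ell,\A,S$ is at least $M/6$ (not just a lower bound on each summand separately), and then pass to $Y(\ell)$ via Fubini before applying the first reverse Markov step. Everything else is a direct calculation.
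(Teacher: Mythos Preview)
Your proposal is correct and follows essentially the same argument as the paper: both proofs use two applications of reverse Markov (the paper phrases it as Markov applied to $a - X$, which is equivalent), first over $\ell$ with threshold $M/12$ to get probability $\ge 1/11$, and then over $\A,S$ with threshold $M/24$ to get probability $\ge 1/23 \ge 1/25$. Your presentation is a bit cleaner in that you state the reverse Markov inequality once and plug in, whereas the paper redoes the complement-and-Markov step each time, but the structure and constants are identical.
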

\begin{proof}
First note that by substituting every indicator in \eqref{eq:PsiDef} with $1$ we get that with probability $1$ over all samples $S$, labelings $\ell$ and random choices of $\A$ we have 
\begin{equation}
\Psi_1 + \Psi_2 \le (1-\beta)\varepsilon + \alpha \beta \;,
\label{eq:psiUpperBoundAlg}
\end{equation}
and therefore $\Pr_\ell \left[\mathbb{E}_{\A,S}[\Psi_1 + \Psi_2] \le (1-\beta)\varepsilon + \alpha \beta\right]=1$. Furthermore, for every $\alpha \le \sqrt{\frac{u}{40 \beta m}}$ we get from Claim~\ref{c:lowerSumPsi12Alg} that $\mathbb{E}_\ell\left[\mathbb{E}_{\A,S}[\Psi_1 + \Psi_2]\right] \ge \frac{1}{6}\left((1-\beta)\varepsilon + \alpha \beta\right)$. 
Denote $X = \mathbb{E}_{\A,S}[\Psi_1 + \Psi_2]$ and $a = (1-\beta)\varepsilon + \alpha \beta$. In these notations we have that \eqref{eq:psiUpperBoundAlg} states that $\Pr_\ell[X \le a]=1$, and Claim~\ref{c:lowerSumPsi12Alg} states that $\mathbb{E}_\ell[X] \ge a/6$. Therefore $a-X$ is a non-negative random variable, and from Markov's inequality we get that
\begin{equation*}
\Pr_\ell[X \le a/12] = \Pr_\ell[a - X \ge 11a/12] \le \Pr_\ell[a - X \ge 1.1\mathbb{E}[a-X]] \le 10/11
\end{equation*}
and therefore $\Pr_\ell[\mathbb{E}_{\A,S}[\Psi_1 + \Psi_2] \ge \frac{1}{12}((1-\beta)\varepsilon + \alpha \beta)] \ge 1/11$.

Next, fix some $\ell \in \{-1,1\}^u$ for which $\mathbb{E}_{\A,S}[\Psi_1 + \Psi_2] \ge \frac{1}{12}((1-\beta)\varepsilon + \alpha \beta)$. Once again, as $\Pr_{\A,S}[\Psi_1 + \Psi_2 \le 12 \mathbb{E}_{\A,S}[\Psi_1 + \Psi_2]]=1$ we get from Markov's inequality that with probability at least $1/25$ we have 
\begin{equation*}
\Pr_{\A,S}\left[\Psi_1 + \Psi_2 \ge \frac{(1-\varepsilon)\beta}{24} + \frac{\alpha\beta}{24}\right] \ge \Pr_{\A,S}\left[\Psi_1 + \Psi_2 \ge \frac{1}{2} \mathbb{E}_{\A,S}[\Psi_1 + \Psi_2]\right] \ge \frac{1}{25} \;.
\end{equation*}
\end{proof}

To finish the proof of Lemma~\ref{l:existsLabelAlg}, observe that from Claims~\ref{c:HspExistsEx} and \ref{c:lowerBoundExpAlgWHP} we get that with positive probability over $\ell \in \{-1,1\}$ there exists a voting classifier $f \in C(\Hsp)$ such that for every $i \in [u]$, $\ell_if(x_i) \ge \theta$ and in addition $\Pr_{\A,S}\left[\Psi_1 + \Psi_2 \ge \frac{(1-\varepsilon)\beta}{24} + \frac{\alpha\beta}{24}\right] \ge \frac{1}{25}$. As this occurs with positive probability, we conclude that there exists some labeling $\ellsp \in \{-1,1\}^u$ satisfying both properties. Since for every set of random choices of $\A$, and every $S \sim \D_{\ellsp}^m$, Claim~\ref{c:boundGenPsi} guarantees that 
\begin{equation*}
\Pr_{(x,y)\sim\D_{\ellsp}}[yf_{\A,S}(x)] \ge \frac{(1-\alpha)\beta}{2} + \Psi_1(\ellsp,f_{\A,S}) + \Psi_2(\ellsp,f_{\A,S}) \;,
\end{equation*}
this concludes the proof of Lemma~\ref{l:existsLabelAlg}, and thus the proof of Theorem~\ref{th:lowerAlg} is now complete.

\subsection{Proof of Existential Lower Bound} \label{sec:exLowerBound}
This section is devoted to the proof of Theorem~\ref{th:lowerEx}. That is, we show the existence of a distribution $\D \in \{\D_\ell\}_{\ell \in \{-1,1\}^u}$ such that with a constant probability over $S \sim \D^m$ there exists some voting classifier $f_S \in C(\Hsp)$ such that $f_S$ has large margins on points in $S$, but has large generalization probability with respect to $\D$. To this end, let $m$ be such that $\frac{\ln N}{\theta^2} < \left(\frac{m}{\ln m}\right)^{9/10}$, and note that $m = \left(\frac{\ln N}{\theta^2}\right)^{1+\Omega(1)}$. Let $u = \frac{40m}{\ln m} \le 2^{N^{O(1)}}$, and let $d = \frac{\ln N}{\theta^2}$.

Similarly to the proof of Theorem~\ref{th:lowerAlg}, the main challenge is to show the existence of a labeling that satisfies all desired properties. We draw the reader's attention to the fact that unlike the previous proof, the distribution over labelings is not uniform over the entire set $\{-1,1\}^u$, but rather a designated subset of sparse labelings.

With every labeling $\ell \in \{-1,1\}^u$ and an $m$-point sample $S$, we associate a classifier $h_{\ell,S}$ as follows. Intuitively, $h_{\ell,S}$ "adverserially changes" at most $d$ labels of points in $\{\xi_2,\ldots,\xi_{u-d}\}$ that were not picked by $S$, and chooses the majority label for points in $\{\xi_{u-d+1},\ldots,\xi_u\}$. Formally, let ${\cal I}_S \subseteq \{\xi_2,\ldots,\xi_{u-d}\}\setminus S$ be an arbitrary sets of size at most $d$, then for every $x \in \{\xi_1,\ldots,\xi_{u-d}\}$, $h_{\ell,S}(x) = -\ell(x)$ if and only if $x \in {\cal I}_S$, and for every $x \in \{\xi_{u-d+1},\ldots,\xi_u\}$, $h_{\ell,S}(x)$ is the majority of labels of $x$ in $S$. That is $h_{\ell,S}(x)=1$ if and only if $(x,1)$ appears in $S$ more times than $(x,-1)$. Break ties arbitratily.

\begin{lemma} \label{l:existsLabelEx}
If $\alpha \le \sqrt{\frac{d}{40 \beta m}}$ then there exists $\ellsp \in \{-1,1\}^u$ such that 
\begin{enumerate}
	\item For every $i \in [u-d]$, $\ellsp_i = 1$;
	\item With probability at least $99/100$ over the choice of sample $S \sim \D_{\ellsp}^m$, there exists a voting classifier $f_S \in C(\Hsp)$ such that $f_S(\xi_i)h_{\ellsp,S}(\xi_i) \ge \theta$ for all $i \in [u]$; and 
	\item with probability at least $1/25$ over $S \sim \D_{\ellsp}^m$ we have 
	$$ \Pr_{(x,y)\sim\D_{\ellsp}}[yh_{\ellsp,S}(x)<0] \ge \frac{(1-\alpha)\beta}{2} + \frac{(1-\beta)\varepsilon d}{8(u-d-1)} + \frac{\alpha \beta}{24} \;.$$
\end{enumerate}
\end{lemma}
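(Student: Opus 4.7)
The plan is to mirror the proof of Lemma~\ref{l:existsLabelAlg}, with the algorithmic classifier $f_{\A,S}$ replaced by the hand-crafted adversarial $h_{\ellsp,S}$, and with a richer fixed hypothesis set $\Hsp$ that realises \emph{every} labeling that $h_{\ellsp,S}$ can produce. First I fix $\ellsp_i = 1$ for all $i \in [u-d]$ (securing property~1 by construction), draw $\ellsp_i$ i.i.d.\ uniform in $\{-1,1\}$ for $i \in [u-d+1,u]$, and take $\mathcal{I}_S$ to be some canonical $d$-element subset of $\{\xi_2,\ldots,\xi_{u-d}\}\setminus S$ whenever one exists. Each $\xi_i$ with $i\in[2,u-d]$ is drawn with probability $\frac{(1-\beta)\varepsilon}{u-d-1}$ per trial; a Chernoff / coupon-collector estimate---the ``$m^{\Omega(1)}$ uncovered points'' calculation sketched in Section~\ref{sec:overview}---then shows that, in the parameter regime of Theorem~\ref{th:lowerEx}, $|\mathcal{I}_S|=d$ holds with probability at least $99/100$ over $S$.

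Next I build the hypothesis set. Any realisable $h_{\ellsp,S}$ has at most $d$ minus ones on $[u-d]$ (on $\mathcal{I}_S$) and at most $d$ more on $[u-d+1,u]$, hence is $2d$-sparse as a labeling of $\Xs$. I therefore apply Lemma~\ref{l:hypoDist} with sparsity parameter $2d$ and with $\delta$ chosen so that $\delta^{-1}$ dominates $\binom{u}{\le 2d}$. Since $e^{\Theta(\theta^2\cdot 2d)} = N^{\Theta(1)}$, $\ln u \le N^{O(1)}$ and $\ln\delta^{-1} = O(d\ln(u/d)) \le N^{O(1)}$, the lemma's size bound still yields $\ln|\Hsp| = \Theta(\ln N)$. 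A union bound over all $2d$-sparse labelings combined with Yao's principle then pins down a fixed $\Hsp \in \supp(\mu)$ for which \emph{every} $2d$-sparse $\ell \in \{-1,1\}^u$ is realised by some $f_\ell \in C(\Hsp)$ with $\ell_i f_\ell(\xi_i) \ge \theta$ for all $i$. Since $|\mathcal{I}_S|\le d$ is automatic from the definition, $h_{\ellsp,S}$ is always $2d$-sparse, and so property~2 holds with probability~$1$ over $S$.

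To bound the generalization error I turn to $\Psi_1+\Psi_2$. On the event $|\mathcal{I}_S|=d$ the disagreement set of $h_{\ellsp,S}$ with $\ellsp$ on $[2,u-d]$ is precisely $\mathcal{I}_S$, giving the deterministic identity $\Psi_1 = \frac{(1-\beta)\varepsilon d}{u-d-1}$---eight times the target contribution to property~3. For $\Psi_2$ I replay the Anthony--Bartlett argument from Claim~\ref{c:lowerSumPsi12Alg}: on $[u-d+1,u]$, $h_{\ellsp,S}(\xi_i)$ is the sample-majority label, which as one specific estimator still obeys $\E_{\ellsp,S}\!\left[\mathbbm{1}_{\ellsp_i h_{\ellsp,S}(\xi_i)<0}\mid\sigma_i=n\right] \ge \Phi(n+2,\alpha)$; convexity of $\Phi(\cdot,\alpha)$, together with $\E[\sigma_i]=\beta m/d$ and the standing hypothesis $\alpha \le \sqrt{d/(40\beta m)}$, deliver $\E_{\ellsp,S}[\Psi_2] \ge \alpha\beta/6$. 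Two applications of Markov (using $\Psi_2 \le \alpha\beta$ almost surely), exactly as in Claim~\ref{c:lowerBoundExpAlgWHP}, upgrade this to: for at least a $1/11$-fraction of $\ellsp$, $\Pr_S[\Psi_2 \ge \alpha\beta/24] \ge 1/25$. Intersecting with the probability-$\ge 99/100$ event $|\mathcal{I}_S|=d$ and invoking Claim~\ref{c:boundGenPsi} then yields property~3 with probability at least $1/25$ (adjusting small constants as needed). Selecting any $\ellsp$ in the good fraction finishes the proof.

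The main obstacle is the second paragraph: producing a \emph{single} fixed $\Hsp$ chosen before either $\ellsp$ or $S$ is drawn that simultaneously realises every possible output labeling of the adversary $h_{\ellsp,S}$, while keeping $\ln|\Hsp| = \Theta(\ln N)$. Widening the sparsity parameter to $2d$ and tightening $\delta$ to allow a union bound over all $\binom{u}{\le 2d}$ labelings is what drives the application of Lemma~\ref{l:hypoDist}; once this global covering is in hand, the remainder is a faithful transcription of the Lemma~\ref{l:existsLabelAlg} proof with $f_{\A,S}$ replaced by $h_{\ellsp,S}$.
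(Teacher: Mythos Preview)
Your argument is correct, and for properties~1 and~3 it tracks the paper's proof closely (the coupon-collector bound on $\Pr_S[|\mathcal I_S|=d]$ and the Anthony--Bartlett estimate for $\Psi_2$ are exactly Claim~\ref{c:lowerSumPsi12Ex}, and your double-Markov step is Claim~\ref{c:lowerBoundExpExWHP}; the paper applies Markov to $\Psi_1+\Psi_2$ jointly rather than to $\Psi_2$ alone and then intersecting, which is why its constants come out slightly different, but either route works).

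The genuine divergence is in how you obtain property~2. You strengthen the invocation of Lemma~\ref{l:hypoDist}: pass sparsity $2d$ and shrink $\delta$ below $\binom{u}{\le 2d}^{-1}$, then union-bound so that a single fixed $\Hsp$ realises \emph{every} $2d$-sparse labeling with margin $\theta$. This makes property~2 hold deterministically in $S$, and your size check that $\ln\delta^{-1}=O(d\ln u)\le N^{O(1)}$ keeps $\ln|\Hsp|=\Theta(\ln N)$. The paper instead reuses Claim~\ref{c:HspExistsEx} verbatim (only a $1-1/N$ fraction of labelings in $\L(u,d)$ are guaranteed to be realised by $\Hsp$) and then argues in Claim~\ref{c:highProbSim} that, over random $S$, the adversarial label pattern $h_{\ell',S}$ is distributed like a random element of $\L(u,d)$ conditioned on its last $d$ coordinates, so the $1-1/N$ guarantee transfers to a probability-over-$S$ statement. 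Your route is more elementary and sidesteps that distributional identification entirely, at the modest cost of re-verifying the size bound for the smaller $\delta$; the paper's route avoids that recomputation but hinges on the extra coupling step in Claim~\ref{c:highProbSim}.
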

We first show that the lemma implies Theorem~\ref{th:lowerEx}.
\begin{proof}[Proof of Theorem~\ref{th:lowerEx}]
Fix some $\tau \in [0,49/100]$. Assume first that $\tau \le \frac{d}{50u}$ , and let $\varepsilon = \frac{1}{2}$ and $\beta = \alpha = 0$. With probability $1/25$ over $S$ we have 
\begin{equation*}
\Pr_{(x,y)\sim\D_{\ellsp}}[yh_{\ellsp,S}(x) < 0 ] \ge \frac{(1-\beta)\varepsilon d}{8u} \ge \tau + \Omega\left(\frac{d}{u} \right) = \tau + \Omega\left(\frac{\ln|\Hsp|\ln m}{m\theta^2} + \sqrt{\frac{\tau \ln|\Hsp| \ln m}{m\theta^2}}\right) \;,
\end{equation*}
where the last transition is due to the fact that $d = \theta^{-2}\ln N = \Theta(\theta^{-2}\ln |\Hsp|)$ and $\tau = O(d/u)$.
Moreover, with probability $99/100$ over $S$ there exists $f_S \in C(\Hsp)$ such that $f_S(\xi_i)h_{\ellsp,S}(\xi_i) \ge \theta$ for all $i \in [u]$. We get that with probability at least $1/100$ over the sample $S$ there exists $f_S \in C(\Hsp)$ such that 
\begin{equation*}
\Pr_S[y_jf_S(x_j) < \theta] =\Pr_S[y_jh_{\ellsp,S}(x_j) < 0] = 0 \le \tau \;,
\end{equation*}
and moreover 
\begin{equation*}
\Pr_{(x,y)\sim\D_{\ellsp}}[yf_S(x) < 0 ] = \Pr_{(x,y)\sim\D_{\ellsp}}[yh_{\ellsp,S}(x) < 0 ] \ge \tau + \Omega\left(\frac{\ln|\Hsp|\ln m}{m\theta^2} + \sqrt{\frac{\tau \ln|\Hsp| \ln m}{m\theta^2}}\right) \;.
\end{equation*}

Otherwise, assume $\tau > \frac{d}{50u}$ , and let $\varepsilon = \frac{1}{2}$, $\alpha = \sqrt{\frac{d}{2560\tau m}}$ and $\beta = \frac{64\tau}{32-31\alpha}$. Since $\tau \ge \frac{d}{50u}$, then $\alpha \in [0,1]$. Moreover, for large enough constant $C>0$, if $m>Cd$, then $32 - 31\alpha \ge 64 \cdot \frac{499}{1000} \ge 64 \cdot \frac{101}{100}\tau$, and therefore $\beta \in [0,100/101]$.

Next, let $\left\langle(x_1,y_1), \ldots, (x_m,y_m)\right\rangle \sim \D_{\ellsp}^m$ be a sample of $m$ points drawn independently according to $\D_{\ellsp}$. For every $j \in [m]$, let ${\cal E}_j$ be the event that $(x_j,y_j) \in \{(\xi_i,-\ellsp_i)\}_{i \in [u-d+1,u]}$, then we have $\mathbbm{1}_{y_jf_S(x_j)<0} < \mathbbm{1}_{{\cal E}_j}$. Moreover, $\mathbb{E}[\mathbbm{1}_{{\cal E}_j}] = \frac{(1-\alpha)\beta}{2}$, and $\{\mathbbm{1}_{{\cal E}_j}\}_{j \in [m]}$ are independent. Therefore by Chernoff we get that for large enough $N$, 
\begin{equation*}
\begin{split}
\Pr_{S\sim \D_{\ellsp}^m}\left[ \Pr_{(x,y)\sim S}\left[yh_{\ellsp,S}(x) < 0\right] \ge \tau\;\right] &\le \Pr_{S\sim \D_{\ellsp}^m}\left[\frac{1}{m} \sum_{j \in [m]}{\mathbbm{1}_{{\cal E}_j}} \ge \frac{(1-31\alpha/32)\beta}{2}\;\right] \\
&\le e^{- \Theta\left(\alpha^2\beta m \right)} = e^{-\Theta(d)} \le 10^{-3} \;, \\
\end{split}
\end{equation*}
where the inequality before last is due to the fact that $\alpha^2 \beta m = \frac{d \beta}{2560 \tau} = \Omega(d)$, since $\beta \ge 2\tau$.  

Moreover, since $\alpha \le 1$ then $\beta \le 64\tau$, and therefore $\alpha = \sqrt{\frac{d}{2560 \tau m}} \le \sqrt{\frac{d}{40\beta m}}$. Thus with probability at least $1/25$ over $S$ we get that

\begin{equation*}
\begin{split}
\Pr_{(x,y)\sim\D_{\ellsp}}[yh_{\ellsp,S}(x) < 0 ] &\ge \frac{(1-\alpha)\beta}{2} + \frac{(1-\beta)\varepsilon d}{u-d-1}+ \frac{\alpha\beta}{32} = \frac{(1-31\alpha/32)\beta}{2} + \frac{(1-\beta)\varepsilon d}{u-d-1} + \frac{\alpha\beta}{64} \\&= \tau + \Omega\left(\frac{d}{u} + \sqrt{\frac{\tau d}{m}}\right)
\ge \tau + \Omega\left(\frac{\ln|\Hsp|\ln m}{m\theta^2} + \sqrt{\frac{\tau \ln|\Hsp|}{m\theta^2}}\right) \;,
\end{split}
\end{equation*}
Therefore with probability at least $1/50$ over the sample $S$ we get that $\Pr_{(x,y)\sim S}\left[yh_{\ellsp,S}(x) < 0\right] \le \tau$ and moreover
\begin{equation*}
\Pr_{(x,y)\sim\D_{\ellsp}}[yh_{\ellsp,S}(x) < 0 ] \ge \tau + \Omega\left(\frac{\ln|\Hsp|\ln m}{m\theta^2} + \sqrt{\frac{\tau \ln|\Hsp|}{m\theta^2}}\right) \;.
\end{equation*}
Finally, from Lemma~\ref{l:existsLabelEx} and similarly to the first part of the proof, we get that with probability $1/100$ over the choice of $S$ there exists $f_S \in C(\Hsp)$ such that $h_{\ellsp,S}(\xi_i)f_{S}(\xi_i) \ge \theta$ for all $i \in [u]$. For all these samples $S$ we get that $\Pr_{(x,y)\sim S}\left[yf_{S}(x) < \theta\right] = \Pr_{(x,y)\sim S}\left[yh_{\ellsp,S}(x) < 0\right] \le \tau$ and moreover
\begin{equation*}
\Pr_{(x,y)\sim\D_{\ellsp}}[yf_{S}(x) < 0 ] = \Pr_{(x,y)\sim\D_{\ellsp}}[yh_{\ellsp,S}(x) < 0 ] \ge \tau + \Omega\left(\frac{\ln|\Hsp|\ln m}{m\theta^2} + \sqrt{\frac{\tau \ln|\Hsp|}{m\theta^2}}\right) \;.
\end{equation*}
\end{proof}

For the rest of the section we therefore prove Lemma~\ref{l:existsLabelEx}. As with the proof of Lemma~\ref{l:existsLabelAlg}, we start by lower bounding the expected value of $\Psi_1(\ell,h_{\ell,S})+\Psi_2(\ell,h_{\ell,S})$ over a choice of a labeling $\ell$ and samples $S \in \D_\ell$. We consider next the subset $\L'$ of $\L(u,d)$ containing all labelings $\ell$ satisfying $\ell_i=1$ for all $i \in [u]$. Intuitively, by a coupon-collector like argument we show that with very high probability over the sample $S$, there are at least $d$ points in $\{\xi_i\}_{i \in [u-d]}$ not sampled into $S$. The argument lower bounding $\Psi_2$ is identical to the one in the proof of Lemma~\ref{l:existsLabelEx}.

\begin{claim}  \label{c:lowerSumPsi12Ex}
If $\alpha \le \sqrt{\frac{d}{40\beta m}}$ then 
$$\mathbb{E}_{\ell \in \L'}\left[\mathbb{E}_S\left[\Psi_1(\ell,h_{\ell,S}) + \Psi_2(\ell,h_{\ell,S})\right]\right] \ge \frac{(1-\varepsilon)\beta d}{2(u-d-1)} + \frac{\alpha\beta}{6}\;.$$
\end{claim}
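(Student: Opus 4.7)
The plan is to bound $\Psi_1$ and $\Psi_2$ separately by linearity of expectation, with $\Psi_1$ handled by a coupon-collector style argument that exploits the adversarial choice of ${\cal I}_S$, and $\Psi_2$ handled by a direct re-use of the Anthony--Bartlett machinery already deployed in Claim~\ref{c:lowerSumPsi12Alg}.

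For the $\Psi_1$ term I would first unpack the definition of $h_{\ell,S}$. Every $\ell \in \L'$ satisfies $\ell_i = 1$ on the first $u-d$ coordinates, so $h_{\ell,S}(\xi_i) = -\ell_i = -1$ exactly when $\xi_i \in {\cal I}_S$, and for $i \in [2, u-d]$ each indicator $\mathbbm{1}_{\ell_i h_{\ell,S}(\xi_i) < 0}$ reduces to $\mathbbm{1}_{\xi_i \in {\cal I}_S}$. Taking ${\cal I}_S$ adversarially as large as possible gives $\Psi_1(\ell, h_{\ell,S}) = \frac{(1-\beta)\varepsilon}{u-d-1}\min(d, N_S)$, where $N_S := |\{i \in [2,u-d] : \xi_i \notin S\}|$. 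The distribution of $N_S$ depends only on the $x$-marginal of $\D_\ell$, which is $\ell$-independent, so $\mathbb{E}_\ell \mathbb{E}_S[\Psi_1] = \mathbb{E}_S[\Psi_1]$, and it suffices to show $\Pr_S[N_S \ge d] \ge 1/2$. Writing $p = \frac{(1-\beta)\varepsilon}{u-d-1}$ for the per-trial hit probability of a fixed $\xi_i$, one has $\mathbb{E}[N_S] = (u-d-1)(1-p)^m \ge (u-d-1) e^{-2pm}$. Plugging in $u = 40m/\ln m$ and $\varepsilon = 1/2$ shows that $pm = O(\ln m)$ independently of $\beta$, so $\mathbb{E}[N_S] = \Omega(m^{39/40}/\ln m)$, which overwhelms $d \le (m/\ln m)^{9/10}$. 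Since $N_S$ is a sum of negatively associated Bernoullis, a Chernoff-type bound yields $\Pr_S[N_S < d] = o(1)$, whence $\mathbb{E}_S[\Psi_1] \ge \frac{(1-\beta)\varepsilon d}{2(u-d-1)}$.

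For $\Psi_2$ I would note that, for $i \in [u-d+1, u]$, the value $h_{\ell,S}(\xi_i)$ is defined as the majority label of $\xi_i$ in $S$ and therefore depends on $S$ but \emph{not} directly on $\ell$. Conditioned on the $x$-coordinates of $S$, the labels associated with $\xi_i$ are i.i.d.\ signs biased by $\alpha$ toward $\ell_i$; moreover, for $\ell \in \L'$ the coordinates $\ell_{u-d+1}, \ldots, \ell_u$ are uniform independent signs. This is exactly the setting of Lemma~5.1 of \cite{AB09}, which gives, conditionally on the occupancy count $\sigma_i$, $\mathbb{E}_\ell \mathbb{E}_S[\mathbbm{1}_{\ell_i h_{\ell,S}(\xi_i) < 0} \mid \sigma_i = n] \ge \Phi(n+2, \alpha)$. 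Convexity of $\Phi(\cdot, \alpha)$ combined with Jensen's inequality and $\mathbb{E}[\sigma_i] = \beta m / d$, together with the hypothesis $\alpha \le \sqrt{d/(40\beta m)}$, then yields $\mathbb{E}_\ell \mathbb{E}_S[\Psi_2] \ge \alpha\beta\,\Phi(\beta m/d + 2, \alpha) \ge \alpha\beta/6$, in direct analogy with the second half of Claim~\ref{c:lowerSumPsi12Alg}.

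The main obstacle is the coupon-collector bookkeeping for $\Psi_1$: checking that the specific relations $u = 40m/\ln m$ and $d \le (m/\ln m)^{9/10}$ imposed in the setup of Lemma~\ref{l:existsLabelEx} indeed guarantee $\mathbb{E}[N_S] \gg d$ uniformly over the admissible range of $\beta$, and then promoting this to the high-probability bound $\Pr_S[N_S \ge d] \ge 1/2$ via a Chernoff inequality for negatively associated indicators. Once this is in place, adding the two contributions delivers the claimed bound.
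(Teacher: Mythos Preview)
Your proposal is correct and follows the same overall decomposition as the paper: bound $\Psi_1$ by showing $\Pr_S[N_S \ge d] \ge 1/2$ so that the adversarial set ${\cal I}_S$ has full size $d$, and bound $\Psi_2$ by reusing the Anthony--Bartlett argument from Claim~\ref{c:lowerSumPsi12Alg} verbatim.

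The one technical divergence worth noting is in how $\Pr_S[N_S \ge d] \ge 1/2$ is established. The paper phrases the event ``fewer than $u-2d-1$ distinct points of $\{\xi_2,\ldots,\xi_{u-d}\}$ are collected in $m$ draws'' via the waiting-time random variables $X_k \sim \mathrm{Geom}(p_k)$, lower bounds $\mathbb{E}[\sum_k X_k] \ge \tfrac{4}{3}m$ using the harmonic sum, and then invokes Janson's tail bound for sums of geometrics \cite{J18}. Your route is more direct: compute $\mathbb{E}[N_S] = (u-d-1)(1-p)^m = \Omega(m^{39/40}/\ln m) \gg d$ and apply a Chernoff bound valid under negative association of the occupancy indicators. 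Your argument is arguably more elementary (it avoids the external reference) and is self-contained once one accepts Chernoff for negatively associated Bernoullis; the paper's waiting-time formulation, on the other hand, makes the coupon-collector intuition more explicit. Either way the conclusion is the same, and the $\Psi_2$ half is identical in both.

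One minor remark: your write-up plugs in $\varepsilon = 1/2$, which is the value fixed later in the proof of Theorem~\ref{th:lowerEx} but is not part of the hypothesis of the claim itself. Your argument does not actually need this, since $pm \le m/(u-d-1) = O(\ln m)$ holds for any $\varepsilon,\beta \in [0,1]$, so you may as well drop that assumption.
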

\begin{proof}
Let ${\cal S}$ be the set of all $m$-point samples $S$ for which $| \{\xi_2,\ldots,\xi_{u-d}\}\setminus S | \ge d$. 
For every $S \in {\cal S}$ we have $|{\cal I}_S| = d$, and therefore
\begin{equation*}
\sum_{i \in [2,u-d]}{\mathbbm{1}_{\ell_if_S(\xi_i)<0}} = \sum_{i \in [2,u-d]}{\mathbbm{1}_{f_S(\xi_i)<0}} = |{\cal I}_S| = d\;.
\end{equation*}
Therefore $\mathbb{E}_\ell[\mathbb{E}_S[\Psi_1(\ell,f_S) | S \in {\cal S}]] = \frac{(1-\varepsilon)\beta d}{u-d-1}$. We will show next that $\Pr_S[{\cal S}]\ge 1/2$, and conclude that $\mathbb{E}_\ell[\mathbb{E}_S[\Psi_1(\ell,f_S)]] \ge \frac{(1-\varepsilon)\beta d}{2(u-d-1)}$.
To see this, consider a random sampling $S \sim \D_{\ell}^m$. We will show by a coupon-collector argument that with high probability, no more than $(u-d-1) - d$ elements of $\{\xi_2,\ldots,\xi_{u-d}\}$ are sampled to $S$, and therefore $S \in {\cal S}$. Consider the set of elements of $\{\xi_2,\ldots,\xi_{u-d}\}$ sampled by $S$. For every $k \in [u-2d-1]$, let $X_k$ be the number of samples between the time $(k-1)$th distinct element was sampled from $\{\xi_2,\ldots,\xi_{u-d}\}$ and the time the $k$th distinct element was sampled from $\{\xi_2,\ldots,\xi_{u-d}\}$. Then $X_k \sim Geom\left(p_k\right)$, where $p_k = (1-\beta)\varepsilon \cdot \frac{u-d-k}{u-d-1}$. Denote $X \eqdef \sum_{k \in [u-2d-1]}{X_k}$, then
\begin{equation*}
\begin{split}
\mathbb{E}[X] &= \sum_{k \in [u-2d-1]}{\frac{1}{p_k}} =\sum_{k \in [u-2d-1]}{\frac{u-d}{(1-\beta)\varepsilon(u-d-k)}} = \frac{u-d-1}{(1-\beta)\varepsilon}\sum_{k = d+1}^{u-d-1}{\frac{1}{k}} \\
&\ge (u-d-1)[\ln(u-d-1)-\ln(d+1) - 1] \ge \frac{1}{2}u\ln\frac{u}{d} \ge \frac{1}{20}u \ln u \ge \frac{4}{3}m
\end{split}
\end{equation*}
Therefore by letting $\lambda = \frac{3}{4}$, and $p_{*}= \min_{k \in [u-2d-1]}p_k = (1-\beta)\varepsilon \cdot \frac{u-d-(u-2d-1)}{u-d-1} \ge \frac{d}{u}$ then known tail bounds on the sum of geometrically-distributed random variable (e.g. \cite[Theorem~3.1]{J18}) we get that for large enough values of $m$,
\begin{equation}
\Pr_{S \sim \D^m}[S \notin {\cal S}] = \Pr[X \le m] \le \Pr[X \le \lambda\mathbb{E}[X]] \le e^{-p_{*}\mathbb{E}[X](\lambda - 1 - \ln \lambda)} \le  e^{-\Omega(\ln u)} \le 1/2\;.
\end{equation}
The lower bound on the expectation of $\Psi_2$ is proved identically to the proof in Claim~\ref{c:lowerSumPsi12Alg}. 
\end{proof}

Similarly to Claim~\ref{c:lowerBoundExpAlgWHP}, we conclude the following.
\begin{claim} \label{c:lowerBoundExpExWHP}
For $\alpha \le \sqrt{\frac{d}{40\beta m}}$, then with probability at least $1/11$ over the choice of $\ell \in \L'$ we have 
$$\Pr_{S \sim \D_\ell^m}\left[\Psi_1(\ell,h_{\ell,S}) + \Psi_2(\ell,h_{\ell,S})\ge \frac{(1-\beta)\varepsilon d}{4(u-d-1)} + \frac{\alpha \beta}{12}\right] \ge \frac{1}{25} \;.$$
\end{claim}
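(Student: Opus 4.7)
The plan is to combine two ingredients: a near-deterministic lower bound on $\Psi_1$ coming from the adversarial choice of $\mathcal{I}_S$, and a Markov-based argument on $\Psi_2$ that mirrors the proof of Claim~\ref{c:lowerBoundExpAlgWHP}. Set $a_1 \eqdef \tfrac{(1-\beta)\varepsilon d}{u-d-1}$ and $a_2 \eqdef \alpha\beta$; both $\Psi_1 \le a_1$ and $\Psi_2 \le a_2$ hold almost surely (the former from $|\mathcal{I}_S| \le d$, the latter from bounding each of the $d$ indicators in \eqref{eq:PsiDef} by $1$), so $\Psi_1 + \Psi_2 \le a_1 + a_2$ almost surely.

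From the proof of Claim~\ref{c:lowerSumPsi12Ex} I would extract two separate expectation bounds. First, for every $\ell \in \L'$ the coupon-collector argument yields $\Pr_S[S \in \mathcal{S}] \ge 1/2$, and on that event $\mathcal{I}_S$ can be chosen of size exactly $d$, so $\Psi_1(\ell, h_{\ell,S}) = a_1$; hence $\mathbb{E}_S[\Psi_1] \ge a_1/2$ uniformly in $\ell$. Second, $\mathbb{E}_\ell \mathbb{E}_S[\Psi_2] \ge a_2/6$, as in the analogous algorithmic case. Applying Markov's inequality to the non-negative variable $a_2 - \mathbb{E}_S[\Psi_2]$ on the $\ell$-space (using the a.s.\ bound $\mathbb{E}_S[\Psi_2] \le a_2$) then gives
\[
\Pr_\ell\!\left[\mathbb{E}_S[\Psi_2] \ge \tfrac{a_2}{12}\right] \ge 1 - \frac{a_2 - a_2/6}{a_2 - a_2/12} = \frac{1}{11}.
\]

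For any such good $\ell$, linearity of expectation would give $\mathbb{E}_S[\Psi_1 + \Psi_2] \ge a_1/2 + a_2/12$, and a single reverse-Markov application (using $\Psi_1 + \Psi_2 \le a_1 + a_2$ a.s.) would then yield
\[
\Pr_S\!\left[\Psi_1 + \Psi_2 \ge \tfrac{a_1}{4} + \tfrac{a_2}{24}\right] \ge \frac{(a_1/2 + a_2/12) - (a_1/4 + a_2/24)}{(a_1 + a_2) - (a_1/4 + a_2/24)} \ge \frac{1}{23}.
\]
This matches the conclusion of the claim up to a mild adjustment of internal constants, and is more than sufficient for the subsequent proof of Lemma~\ref{l:existsLabelEx}, which needs only $\Psi_1 + \Psi_2 \ge a_1/8 + a_2/24$.

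The hard part is the following correlation issue: the two natural events---$\Psi_1$ attaining its maximum $a_1$, and $\Psi_2$ being large---are in fact \emph{negatively} correlated, since both are driven by the random split of the $m$ samples between the first and second parts of $\Xs$, and a split that favors one disfavors the other. A direct product-of-probabilities combination of the two individual lower bounds would therefore give a useless (negative) joint lower bound. Working at the level of the expected value of the sum via linearity, and only then applying the reverse Markov inequality to the sum, is what lets us sidestep the correlation issue.
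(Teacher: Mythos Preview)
Your argument is correct. The paper itself gives no proof beyond the remark ``similarly to Claim~\ref{c:lowerBoundExpAlgWHP}'', so your write-up is a careful fleshing-out of that remark, and in fact your refinement---applying the first reverse-Markov step over $\ell$ to $\mathbb{E}_S[\Psi_2]$ alone, while using that $\mathbb{E}_S[\Psi_1]\ge a_1/2$ holds uniformly in $\ell$ (the $x$-marginals of $\D_\ell$ do not depend on $\ell$, so the coupon-collector event $S\in\mathcal S$ has the same probability for every $\ell$)---is necessary. A literal transcription of the algorithmic proof, lumping $\Psi_1+\Psi_2$ together with the bound $\mathbb{E}_\ell\mathbb{E}_S[\Psi_1+\Psi_2]\ge a_1/2+a_2/6\ge(a_1+a_2)/6$, yields only $\Psi_1+\Psi_2\ge(a_1+a_2)/24$, which does \emph{not} recover the $a_1/8$ coefficient required downstream in Lemma~\ref{l:existsLabelEx}.

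One minor point: the threshold you obtain, $a_1/4+a_2/24$, is a factor $2$ weaker in the $a_2$-term than the stated $a_1/4+a_2/12$; this discrepancy appears to be one of several constant-level inconsistencies in this section of the paper, and as you note, your bound already implies Lemma~\ref{l:existsLabelEx}. Your closing paragraph on negative correlation is a correct observation but is not actually needed for the proof: you never attempt to intersect high-probability events for $\Psi_1$ and $\Psi_2$ separately on the $S$-level, so linearity of expectation followed by a single reverse-Markov step on the sum suffices. (If one did want to intersect events, the coupon-collector bound in Claim~\ref{c:lowerSumPsi12Ex} is actually $1-e^{-\Omega(\ln u)}$, not merely $1/2$, so a union bound would also work and in fact yields the stronger conclusion $\Psi_1+\Psi_2\ge a_1+a_2/24$ with probability $\ge 1/25$.)
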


We next want to show that there exists a labeling $\ell \in \L'$ such that with high probability over $S \sim \D_\ell^m$, there exists a voting classifier $f_S \in C(\Hsp)$ attaining high margins with $h_{\ell,S}$. since the distribution induced on $\{\xi_i\})_{i \in [u-d+1,u]}$ by $\D_\ell$ is uniform, we conclude the following for a large enough value of $N$. 

\begin{claim} \label{c:highProbSim}
With probability at least $99/100$ over the choice of a labeling $\ell \in \L'$, 
$$\Pr_{S \sim \D_\ell}\left[\exists f_S \in \C(\Hsp) : \forall i \in [i]. \;
h_{\ell,S}(\xi_i)f_S(\xi_i) \geq \theta\right] \ge \frac{99}{100}\;.$$
\end{claim}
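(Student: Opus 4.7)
The plan is to follow the template of Claim~\ref{c:HspExistsEx}: apply Lemma~\ref{l:hypoDist} together with Yao's minimax principle, but now with the target labeling being $h_{\ell,S}$ (arising from a uniform $\ell \in \L'$ and $S \sim \D_\ell^m$) instead of a uniformly chosen element of $\L(u,d)$. A single $\Hsp$ will serve both invocations of Yao via a union bound, so in effect we strengthen the choice of $\Hsp$ already made in Claim~\ref{c:HspExistsEx}.

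The key observation is that $h_{\ell,S}$ always lies in $\L(u,d)$, regardless of $\ell \in \L'$ and $S$. Indeed, by construction at most $d$ of its first $u-d$ coordinates equal $-1$ (precisely those indexed by ${\cal I}_S$, which has size at most $d$), so the sparsity constraint defining $\L(u,d)$ is satisfied; in particular $h_{\ell,S}$ has at most $2d$ negative coordinates overall. Note that the hint about uniformity of the $x$-marginal of $\D_\ell$ on $\{\xi_i\}_{i\in[u-d+1,u]}$ is used only to guarantee that the last $d$ coordinates of $h_{\ell,S}$ are unconstrained, so nothing beyond this sparsity bound is needed to bring the problem into the scope of Lemma~\ref{l:hypoDist}.

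Now invoke Lemma~\ref{l:hypoDist} with sparsity parameter $2d$ and confidence $\delta = 1/N^2$ to obtain a distribution $\mu$ over hypothesis sets of size $\Theta(N^{O(1)})$ (the factor $e^{\Theta(\theta^2\cdot 2d)}$ remains polynomial in $N$ since $d \le \theta^{-2}\ln N$, mirroring the size bound in Claim~\ref{c:HspExistsEx}). By the lemma, for every fixed $\ell^* \in \L(u,d)$ one has $\Pr_{\H \sim \mu}[\text{no } f \in \C(\H) \text{ satisfies } \forall i.\ \ell^*_if(\xi_i)\ge \theta] \le 1/N^2$. Taking expectation over $\ell^*$ drawn from the joint distribution $p$ of $h_{\ell,S}$ (with $\ell\in_R \L'$, $S\sim\D_\ell^m$), exchanging with the expectation over $\H$ via Fubini, and applying Markov yields $\Pr_{\H \sim \mu}[\Pr_{\ell,S}[\text{no good } f \text{ for } h_{\ell,S}] \ge 1/N] \le 1/N$. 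The analogous statement for $\ell^*$ uniform on $\L(u,d)$ is what proves Claim~\ref{c:HspExistsEx}; a union bound then produces a single $\Hsp \in \supp(\mu)$ simultaneously satisfying both conclusions.

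Finally, a Markov's inequality applied to the inner probability over $S$ shows that the fraction of $\ell \in \L'$ for which $\Pr_{S \sim \D_\ell^m}[\text{no good } f_S \in \C(\Hsp)]$ exceeds $1/100$ is at most $100/N$, which is below $1/100$ for $N$ sufficiently large, yielding the claim. The principal subtlety is recognizing that the labeling $h_{\ell,S}$ is $2d$-sparse deterministically, so Lemma~\ref{l:hypoDist} applies to its distribution directly; once this is noted, the remainder is a routine combination of Yao's principle and two Markov inequalities, in parallel with the proof of Claim~\ref{c:HspExistsEx}.
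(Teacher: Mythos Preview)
Your argument is correct, and it takes a genuinely different route from the paper's. The paper attempts to reuse the \emph{already-fixed} $\Hsp$ from Claim~\ref{c:HspExistsEx}: it introduces the equivalence $\ell \equiv \ell'$ (agreement on the last $d$ coordinates) between $\L(u,d)$ and $\L'$, decomposes $\Pr_{\ell \in_R \L(u,d)}[\exists f\text{ good for }\ell]$ along equivalence classes, and then identifies each conditional probability $\Pr_{\ell}[\,\cdot \mid \ell \equiv \ell']$ with $\Pr_{S \sim \D_{\ell'}^m}[\exists f_S\text{ good for }h_{\ell',S}]$; the claim then follows by two Markov steps, exactly as in your last paragraph. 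You instead bypass this coupling entirely: you note that $h_{\ell,S}$ is deterministically $2d$-sparse, invoke Lemma~\ref{l:hypoDist} directly against the (non-uniform) law of $h_{\ell,S}$, and strengthen the selection of $\Hsp$ via a union bound so that a single set serves both Claim~\ref{c:HspExistsEx} and the present claim. Your route is more robust---it uses nothing about the distribution of $h_{\ell,S}$ beyond sparsity---at the mild cost of revisiting how $\Hsp$ is chosen; the paper's route avoids re-opening that choice but leans on an identification of distributions that, as written, is not literally true (the conditional law of $\ell \in_R \L(u,d)$ given $\ell \equiv \ell'$ on the first $u-d$ coordinates is uniform over $d$-sparse patterns, whereas the first $u-d$ coordinates of $h_{\ell',S}$ are determined by the arbitrary choice of ${\cal I}_S$). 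Your approach thus cleanly sidesteps what is at best a sketchy step in the paper.
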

\begin{proof}
For two labelings $\ell \in \L(u,d)$ and $\ell' \in \L'$ we say that $\ell$ and $\ell'$ are similar, and denote $\ell \equiv \ell'$ if for all $i \in [u-d+1,u]$, $\ell_i = \ell'_i$.
From Claim~\ref{c:HspExistsEx} we know that 
\begin{equation*}
\begin{split}
1-1/N &\le \Pr_{\ell \in_R \L(u,d)}[\exists f \in \C(\Hsp) : \forall i \in [u]. \; \ell_if(\xi_i) \geq \theta] = \\
&= \sum_{\ell' \in \L}{\Pr_{\ell \in_R \L(u,d)}[\exists f \in \C(\Hsp) : \forall i \in [u]. \; \ell_if(\xi_i) \geq \theta | \ell \equiv \ell'] \cdot \Pr_{\ell \in_R \L(u,d)}[\ell \equiv \ell']} \\
&= \sum_{\ell' \in \L}{\Pr_{S \sim \D_{\ell'}^m}[\exists f_S \in \C(\Hsp) : \forall i \in [u]. \; h_{\ell',S}(\xi_i)f(\xi_i) \geq \theta | \ell \equiv \ell'] \cdot \Pr_{\ell \in_R \L(u,d)}[\ell \equiv \ell']}
\end{split}
\end{equation*}
For a large enough value of $N$ we conclude that with probability at least $99/100$ over a choice of $\ell' \in \L'$, for at least a $99/100$ fraction of samples $S \sim \D_{\ell'}^m$ there exists a voting classifier $f_S \in C(\Hsp)$ attaining high margins with $h_{\ell',S}$.
\end{proof}
Combining Claims \ref{c:highProbSim} and \ref{c:lowerBoundExpExWHP} we conclude that if $\alpha \le \sqrt{\frac{d}{40 \beta m}}$ then there exists $\ellsp \in {\cal L}'$ satisfying the guarantees in Lemma~\ref{l:existsLabelEx}. The proof of the lemma, and therefore of Theorem~\ref{th:lowerEx} is now complete.

\section{Existence of a Small Hypotheses Set} \label{sec:lemma}
This section is devoted to the proof of Lemma~\ref{l:hypoDist}. That is, we present a distribution $\mu$ over fixed-size hypothesis sets and show that for every fixed labeling $\ell$ with not too many negative labels, with high probability over ${\cal H} \sim \mu$, $C({\cal H})$ contains a voting classifier $f$ that attains good margins with respect to $\ell$. In fact, our proof not only shows existence of such a voting classifier, but also presents a procedure for constructing one. The presented algorithm is an adaptation of the AdaBoost algorithm.

More formally, fix some $\theta \in (0,1/40)$, $\delta \in (0,1)$ and an integer $d \le u$. 
Let $\gamma = 4\theta \in (0,1/10)$ and let $N = 2\gamma^{-2}\ln u \cdot \ln \frac{\gamma^{-2}\ln u}{\delta}\cdot e^{O(\theta^2 d)}$.We define the distribution $\mu$ via the following procedure, that samples a hypothesis set $\H \sim \mu$. Let $\h : \Xs \to \{-1,1\}$ be defined by $\h(x)=1$ for all $x \in \Xs$. Sample independently and uniformly at random $N$ hypotheses $h_1,\ldots,h_{N} \in_R \Xs \to \{-1,1\}$, and define $\H \eqdef \{\h\} \cup \{h_j\}_{j \in[N]}$.

Clearly every $\H \in \supp(\mu)$ satisfies $|\H| = N+1$. We therefore turn to prove the second property.
To this end, let $k = \gamma^{-2}\ln u$. In order to show existence of a voting classifier, we conceptually change the procedure defining $\mu$, and think of the random hypotheses as being sampled in $k$ equally sized "batches", each of size $N/k$, and adding $\h$ to each of them. Denote the batches by $\H_1,\H_2, \ldots,\H_k$.
We consider next the following procedure to construct a voting classifier $f \in C(\H)$ given $\H \sim \mu$.
We will use the main ideas from the AdaBoost algorithm. Recall that AdaBoost creates a voting classifier using a sample  $S =((x_1,y_1),\dots,(x_m,y_m))$ in iterations. Staring with $f_0 = 0$, in iteration $j$,
it computes a new voting classifier $f_{j} = f_{j-1} + \alpha_j h_j$ for some 
hypothesis $h_j \in \H$ and weight $\alpha_j$. The heart of the algorithm lies in choosing $h_j$. In each iteration, AdaBoost computes a
distribution $D_j$ over $S$ and chooses a hypothesis
$h_j$ minimizing
\begin{equation*}
\eps_j = \Pr_{i \sim D_j}[h_j(x_i) \neq y_i].
\end{equation*}
The weight it then assigns is $\alpha_j = (1/2)\ln((1-\eps_j)/\eps_j)$ and
the next distribution $D_{j+1}$ is 
\begin{equation*}
D_{j+1}(i) = \frac{D_j(i) \exp(-\alpha_j y_i h_j(x_i))}{Z_j}
\end{equation*}
where $Z_j$ is a normalization factor, namely 
\begin{equation*}
Z_j = \sum_{i=1}^d D_j(i) \exp(-\alpha_j y_i h_j(x_i)).
\end{equation*}
The first distribution $D_1$ is the uniform
distribution.

We alter the above slightly assigning uniform weights on the hypotheses, and setting $\alpha_j = \frac{1}{2}\ln\frac{1+2\gamma}{1-2\gamma}$ for all iterations $j$. The algorithm is formally described as Algorithm~\ref{alg:adaptedAdaBoost}.

\begin{algorithm}[ht]
\begin{algorithmic}[1]
\REQUIRE $(\H_1,\ldots,\H_k) \sim \mu$
\ENSURE $f \in C\left(\bigcup_{j \in [k]}{\H_j}\right)$
\STATE let $\alpha = \frac{1}{2}\ln\frac{1 + 2 \gamma}{1 - 2 \gamma}$
\STATE let $f(x)=0$ for all $x\in \Xs$
\STATE let $D_1(i)=\frac{1}{u}$ for all $i \in [u]$.
\FOR{$j=1$ to $k$}
\STATE Find a hypothesis $h_j \in \H_j$ satisfying $\sum_{i \in [u]}{D_j(i) \mathbbm{1}_{y_i \ne h_j(x_i)}} \le \frac{1}{2} - \gamma$. \\ If there is no such hypothesis, {\bf return} {\em fail}. \label{l:cond}
\STATE $f_j \leftarrow f_{j-1} + h_j$.
\STATE $Z_j \leftarrow \sum_{i \in [u]}{D_j(i)\exp(-\alpha y_i h_j(x_i))}$.
\STATE for every $i \in [u]$ let $D_{j+1}(i) = \frac{1}{Z_j}D_j(i)\exp(-\alpha y_i h_j(x_i))$.
\ENDFOR
\RETURN $\frac{1}{k}f_k$.
\caption{Construct a Voting Classifier}
\label{alg:adaptedAdaBoost}
\end{algorithmic}
\end{algorithm}

We will prove that the algorithm fails with probability at most $\delta$ (over the choice of $\H$), and that if the algorithm does not fail, then it returns a voting classifier with minimum margin at least $\theta$. First note that if $f$ is the classifier returned by the algorithm, then clearly $f = \frac{1}{k}\sum_{j \in [k]}{h_j} \in C(\H)$ is a voting classifier.

\begin{claim}
Algorithm~\ref{alg:adaptedAdaBoost} fails with probability at most $\delta$.
\end{claim}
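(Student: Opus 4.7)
The plan is to analyze each iteration conditional on the prior history (which deterministically fixes $D_j$, while $\H_j$ remains a fresh independent batch of $\hat h$ plus $N/k$ uniformly random hypotheses) and to union-bound over the $k$ iterations. Let $S_- = \{i : \ell_i = -1\}$ (so $|S_-| \le d$), and set $p_- := \sum_{i \in S_-} D_j(i)$, which is exactly the $D_j$-error of $\hat h$. Whenever $p_- \le \tfrac{1}{2}-\gamma$ (Case~A), the constant hypothesis $\hat h \in \H_j$ already witnesses the edge condition, so iteration $j$ succeeds with probability $1$. In the remaining Case~B, the weight $p_- > \tfrac{1}{2}-\gamma$ is supported on at most $d$ coordinates, so Cauchy--Schwarz yields $\|D_j\|_2^2 \ge (\tfrac{1}{2}-\gamma)^2/d = \Omega(1/d)$.

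The key technical step is to show that in Case~B, a single uniformly random $h$ satisfies the edge condition with probability at least $q := c\,e^{-C\theta^2 d}$ for some universal constants $c,C$. The edge condition rewrites as $X := \sum_i D_j(i)\,\ell_i h(\xi_i) \ge 2\gamma$, where the signs $\ell_i h(\xi_i)$ are i.i.d.\ Rademacher. I would establish this Rademacher-tail lower bound via exponential tilting: define a biased measure $Q$ by $\Pr_Q[\ell_i h(\xi_i) = 1] = \tfrac{1}{2} + \tfrac{1}{2}\tanh(\lambda D_j(i))$, with $\lambda$ chosen so that $\mathbb{E}_Q[X] = 2\gamma$; a direct computation gives $\lambda = \Theta(\gamma/\|D_j\|_2^2)$ and $\mathrm{KL}(Q\,\|\,P) = O(\gamma^2/\|D_j\|_2^2) = O(\theta^2 d)$ (using $\gamma = 4\theta$ and the Case~B bound on $\|D_j\|_2^2$). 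Under $Q$, the event $\{X \ge 2\gamma\}$ has probability $\Omega(1)$, which I would verify by splitting off the ``one heavy coordinate'' subcase $\max_i D_j(i) \ge 2\gamma$ (handled by conditioning on that coordinate's sign and using symmetry of the rest of the Rademacher sum) from the ``spread weights'' subcase (handled by a second-moment / Bernstein-type argument under the tilted $Q$). The standard change-of-measure bound $\Pr_P[A] \ge (\Pr_Q[A]/e)\exp(-\mathrm{KL}(Q\,\|\,P)/\Pr_Q[A])$ then yields $\Pr_P[X \ge 2\gamma] \ge e^{-O(\theta^2 d)}$ as required.

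Given this per-hypothesis bound, the probability that none of the $N/k$ i.i.d.\ random hypotheses in $\H_j$ has edge $\ge \gamma$ is at most $(1-q)^{N/k} \le e^{-qN/k}$, which is at most $\delta/k$ for the parameter choice $N/k = \Theta(e^{C\theta^2 d}\ln(k/\delta))$ dictated by the lemma; a union bound over the $k$ iterations then bounds the total failure probability by $\delta$. The main obstacle will be the Case~B tail estimate: getting the correct exponent $O(\theta^2 d)$, rather than the much weaker $\Theta(d)$ that comes from the naive strategy of conditioning on $h$ agreeing with $\ell$ on all of $S_-$, crucially exploits the variance lower bound $\|D_j\|_2^2 = \Omega(1/d)$ through the tilting argument above, and requires careful bookkeeping of the $Q$-probability of $\{X \ge 2\gamma\}$ so that the constants in the exponent do not secretly blow up with $\theta^2 d$.
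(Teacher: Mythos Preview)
Your proposal is correct and shares the paper's high-level structure (union bound over the $k$ batches, Case~A handled by $\hat h$, Case~B handled by the random hypotheses), but obtains the key Rademacher tail lower bound in Case~B by a different route. The paper splits $X$ as $\sum_{i\in M} D_j(i)\ell_i h(\xi_i) + \sum_{i\notin M} D_j(i)\ell_i h(\xi_i)$ with $M=S_-$; the second sum is symmetric and hence nonnegative with probability $\tfrac12$, while for the first---a Rademacher sum with at most $d$ terms and total weight exceeding $\tfrac12-\gamma$---it simply invokes Montgomery-Smith's two-sided Rademacher tail estimate to get $\Pr\bigl[\sum_{i\in M} D_j(i)\epsilon_i \ge 2\gamma\bigr]\ge e^{-\Theta(\gamma^2 d)}$ in one line. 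Your exponential-tilting argument on the full sum is a valid self-contained substitute and ultimately rests on the same Cauchy--Schwarz variance bound $\|D_j\|_2^2=\Omega(1/d)$ that implicitly drives the Montgomery-Smith application, but it is more laborious: the step ``$\Pr_Q[X\ge 2\gamma]=\Omega(1)$'' genuinely requires the heavy-coordinate / spread-weights case split you sketch, and keeping the KL at $O(\theta^2 d)$ when $\lambda D_j(i)$ is not uniformly small needs some care. The paper's split-and-cite is considerably shorter; your approach buys independence from the Montgomery-Smith black box at the cost of essentially reproving a special case of it.
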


\begin{proof}
Since $\H_1,\ldots,\H_k$ are independent, it is enough to show that for every $j \in [k]$, for every $w \in \Delta_u$ with probability at least $1 - \delta/k$ there exists $h_j \in \H_j$ such that 
\begin{equation}
\sum_{i \in [u]}{w_i \mathbbm{1}_{y_i \ne h_j(x_i)}} \le \frac{1}{2} - \gamma \;,
\label{eq:req}
\end{equation}
 where $\Delta_u$ is the $u$-dimensional simplex. 
First note that if $\sum_{i \in [u] : y_i = -1}{w_i} \le \frac{1}{2}-\gamma$, then $\h \in \H_j$ satisfies \eqref{eq:req}. We can therefore assume  $\sum_{i \in [u] : y_i = -1}{w_i} > \frac{1}{2}-\gamma$.
Next, note that for every $h : \Xs \to \{-1,1\}$ we have 
\begin{equation*}
\begin{split}
\sum_{i \in [u]}{w_i \mathbbm{1}_{y_i \ne h(x_i)}} &= \sum_{i \in [u]}{\frac{1}{2}(w_i - w_iy_ih(x_i))} =\frac{1}{2}\left(\sum_{i \in [u]}{w_i} - \sum_{i \in [u]}{w_iy_ih(x_i)}\right) =\frac{1}{2} - \frac{1}{2}\sum_{i \in [u]}{w_iy_ih(x_i)}
\end{split}
\end{equation*}
Therefore $\sum_{i \in [u]}{w_i \mathbbm{1}_{y_i \ne h(x_i)}} > \frac{1}{2}- \gamma$ if and only if $\sum_{i \in [u]}{w_iy_ih(x_i)} < 2\gamma$. We want to show that with probability at most $\frac{\delta}{k}$ every $h \in \H_j$ satisfies $\sum_{i \in [u]}{w_iy_ih_j(x_i)} < 2\gamma$. We claim that it is enough to show that
\begin{equation}
\Pr_{h \in_R \Xs\to\{-1,1\}}\left[ \sum_{i \in [u]}{w_iy_i h(x_i)} \ge 2\gamma \; \right] \ge \frac{k\ln\frac{k}{\delta}}{N} = \frac{1}{2}e^{-\Theta(\gamma^2 d)} 
\label{eq:req2}
\end{equation}
To see why this is enough assume that \eqref{eq:req2} is true, then since sampling $\H_j$ means independently and uniformly sampling $N/k$ hypotheses $h \in_R \Xs\to\{-1,1\}$, the probability that there exists $h \in \H_j$ such that \eqref{eq:req} holds is at least
\begin{equation*}
1 - (1-\frac{k\ln\frac{k}{\delta}}{N})^{N/k} \ge 1 - \exp\left(-\frac{k\ln\frac{k}{\delta}}{N} \cdot \frac{N}{k}\right) = 1 - \frac{\delta}{k} \;.
\end{equation*}
We thus turn to prove that \eqref{eq:req2} holds.
To this end, let $M \eqdef \{i \in [u] : \beta_i < 0\}$. Recall that $|M|\le d$ and that we assumed $\sum_{i \in M}{w_i} = \sum_{i \in M}{|y_iw_i|} \ge \frac{1}{2}- \gamma$. From a known tail bound by Montgomery-Smith \cite{MS90} on the sum of Rademacher random variables we have that since $\gamma \in (0,1/10)$,
\begin{equation*}
\begin{split}
\Pr\left[ \sum_{i \in [u]}{w_iy_i h(x_i)} \ge 2\gamma \; \right] &\ge \Pr\left[ \sum_{i \in M}{w_iy_i h(x_i)} \ge 2\gamma \; and\; \sum_{i \in [u]\setminus M}{w_iy_i h(x_i)} \ge 0\; \right]\ge \frac{1}{2}e^{-\Theta(\gamma^2 d)}
\end{split}
\end{equation*}
\end{proof}

\begin{claim}
If Algorithm~\ref{alg:adaptedAdaBoost} does not fail, then for every $i \in [u]$, $y_if(x_u) \ge \theta$.
\end{claim}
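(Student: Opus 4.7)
The plan is to follow the classical AdaBoost margin analysis, adapted to the fact that Algorithm~\ref{alg:adaptedAdaBoost} uses a fixed weight $\alpha = \frac{1}{2}\ln\frac{1+2\gamma}{1-2\gamma}$ in every iteration. Unrolling the update rule yields
$$
D_{k+1}(i) \;=\; \frac{1}{u}\cdot\frac{\exp(-\alpha\, y_i f_k(\xi_i))}{\prod_{j\in[k]} Z_j}\,,
$$
and since $D_{k+1}$ is a probability distribution, summing over $i$ gives the key identity $\sum_{i\in[u]} \exp(-\alpha\, y_i f_k(\xi_i)) = u\prod_{j\in[k]} Z_j$. Because the returned classifier is $f = f_k/k$, any index $i$ with $y_i f(\xi_i) < \theta$ contributes a term strictly larger than $e^{-\alpha k\theta}$ to the left-hand side, and therefore
$$
\bigl|\{i\in[u] : y_i f(\xi_i) < \theta\}\bigr| \;<\; u\cdot e^{\alpha k\theta}\prod_{j\in[k]} Z_j\,.
$$
It thus suffices to show that the right-hand side is at most $1$: then the count is a nonnegative integer strictly less than $1$, hence $0$, which is exactly the claim.

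The next step is to bound each $Z_j$. Writing $Z_j = (1-\eps_j)e^{-\alpha} + \eps_j e^{\alpha}$ where $\eps_j$ is the weighted error, I observe that $Z_j$ is monotonically increasing in $\eps_j$ (since $e^{\alpha} > e^{-\alpha}$), so the edge condition on line~\ref{l:cond} guaranteeing $\eps_j \le \frac{1}{2}-\gamma$ implies
$$
Z_j \;\le\; \bigl(\tfrac{1}{2}+\gamma\bigr)e^{-\alpha} + \bigl(\tfrac{1}{2}-\gamma\bigr)e^{\alpha} \;=\; 2\sqrt{(1/2-\gamma)(1/2+\gamma)} \;=\; \sqrt{1-4\gamma^2}\,,
$$
where the middle equality is exactly where the choice of $\alpha$ as the AdaBoost-optimal weight for edge $\frac{1}{2}-\gamma$ is used. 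Plugging in gives $\prod_j Z_j \le (1-4\gamma^2)^{k/2}$.

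Finally, I verify $u(1-4\gamma^2)^{k/2}e^{\alpha k\theta} \le 1$, i.e.\ $\ln u + k\bigl(\alpha\theta + \tfrac{1}{2}\ln(1-4\gamma^2)\bigr) \le 0$. For $\gamma\in(0,1/10)$ and $\theta=\gamma/4$, elementary estimates give $\ln\frac{1+2\gamma}{1-2\gamma} \le \frac{4\gamma}{1-2\gamma} \le 5\gamma$ and hence $\alpha\theta \le \tfrac{5}{8}\gamma^2$, while the inequality $-\ln(1-y)\ge y$ yields $-\tfrac{1}{2}\ln(1-4\gamma^2) \ge 2\gamma^2$. Combining, $\alpha\theta + \tfrac{1}{2}\ln(1-4\gamma^2) \le -\tfrac{11}{8}\gamma^2$, and with $k=\gamma^{-2}\ln u$ the bound becomes $\ln u - \tfrac{11}{8}\ln u \le 0$, as required. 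The main obstacle is the modified setting (fixed $\alpha$ rather than per-iteration optimization and no reference to a training sample, just the whole $\Xs$); once one realizes that the classical potential-function argument transports verbatim and that the specific $\alpha$ above is exactly the one that makes $Z_j \le \sqrt{1-4\gamma^2}$ tight, the numerical slack provided by $\gamma<1/10$ and $k=\gamma^{-2}\ln u$ suffices.
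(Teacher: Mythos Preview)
Your proposal is correct and follows essentially the same AdaBoost potential-function argument as the paper: unroll the update to obtain $\sum_i \exp(-\alpha y_i f_k(\xi_i)) = u\prod_j Z_j$, bound each $Z_j \le \sqrt{1-4\gamma^2}$ using the edge condition $\eps_j \le \tfrac12-\gamma$ and the specific choice of $\alpha$, and then check that the parameters $k=\gamma^{-2}\ln u$ and $\theta=\gamma/4$ give enough slack. The only cosmetic difference is that the paper bounds each individual term $\exp(-\alpha y_i f_k(\xi_i))$ by the full sum and then directly solves for $y_i f(\xi_i) \ge \gamma/2 - \ln u/(4k\gamma) = \gamma/4$, whereas you phrase it as a counting argument (the number of indices with margin below $\theta$ is a nonnegative integer strictly less than $1$); the numerical estimates also differ slightly (the paper uses $\alpha \le 4\gamma$ valid for $\gamma<1/4$, you use $\alpha \le \tfrac52\gamma$ valid for $\gamma<1/10$), but both routes are standard and equivalent.
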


\begin{proof}
We first show by induction that for all $j \in [k]$ we have that for all $i \in [u]$
\begin{equation*}
\exp(-\alpha y_if_j(x_i)) = u\cdot D_{j+1}(i)\prod_{\ell \in [j]}{Z_\ell} \;.
\end{equation*}
To see this observe that for all $i \in [u]$, $D_2(i) = \frac{D_1(i)}{Z_1}\exp(- \alpha y_i h_1(x_i))$. Since $h_1=f_1$ and by rearranging we get that $\exp( - \alpha y_i f_1(x_i)) = \frac{D_2(i)Z_1}{D_1(i)} = u \cdot D_2(i)Z_1$. For the induction step we have that 
\begin{equation*}
\begin{split}
\exp(-\alpha y_if_{j}(x_i)) &=\exp(-\alpha y_i(f_{j-1}(x_i)+h_{j}(x_i))) = \exp(-\alpha y_if_{j-1}(x_i))\cdot \exp(-\alpha y_ih_{j}(x_i)) \\
&= u\cdot D_{j}(i)\prod_{\ell \in [j-1]}{Z_\ell} \cdot \frac{Z_{j}D_{j+1}(i)}{D_{j}(i)} \\
&= u\cdot D_{j+1}(i)\prod_{\ell \in [j]}{Z_\ell}
\end{split}
\end{equation*}
Since $\sum_{i \in [u]}{D_{k+1}(i)}=1$, we get that 
\begin{equation}
\sum_{i \in [u]}{\exp(-\alpha y_i f_k(x_i)}) = u\prod_{\ell \in [k]}{Z_\ell} \;.
\label{eq:boundSumExp}
\end{equation}
We turn therefore to bound $Z_\ell$ for $\ell \in [k]$. Denote $\varepsilon_\ell = \sum_{i \in [u]}{D_\ell(i) \cdot \mathbbm{1}_{h_\ell(x_i) \ne y_i}}$. Then
\begin{equation*}
\begin{split}
Z_\ell &= \sum_{i \in [u]}{D_\ell(i)\exp(-\alpha y_i h_\ell(x_i))} = \sum_{i \in [u]}{D_\ell(i)\exp\left(-\frac{1}{2}\ln\left(\frac{1+2\gamma}{1-2\gamma}\right) y_i h_\ell(x_i)\right)} \\
&= \sum_{i \in [u]}{D_\ell(i)\left(\frac{1+2\gamma}{1-2\gamma}\right)^{- \frac{1}{2}y_i h_\ell(x_i)}} = \varepsilon_\ell\left(\frac{1+2\gamma}{1-2\gamma}\right)^{\frac{1}{2}} + (1 - \varepsilon_\ell)\left(\frac{1+2\gamma}{1-2\gamma}\right)^{-\frac{1}{2}} \\
&= \left(\frac{\varepsilon_\ell}{1-2\gamma} + \frac{1 - \varepsilon_\ell}{1+2\gamma}\right)\sqrt{(1+2\gamma)(1-2\gamma)}
\end{split}
\end{equation*}
By the condition in line~\ref{l:cond} we know that $\varepsilon_\ell \le \frac{1}{2}- \gamma$. Since $\left(\frac{\varepsilon_\ell}{1-2\gamma} + \frac{1 - \varepsilon_\ell}{1+2\gamma}\right)$ is increasing as a function of $\varepsilon_\ell$ we therefore get that
\begin{equation*}
Z_\ell \le \left(\frac{\frac{1}{2}- \gamma}{1-2\gamma} + \frac{\frac{1}{2}+ \gamma}{1+2\gamma}\right)\sqrt{(1+2\gamma)(1-2\gamma)} = \sqrt{(1+2\gamma)(1-2\gamma)} \le 1 - 2\gamma^2 \;,
\end{equation*}
where the last inequality follows from the fact that $1 - 4 \gamma^2 \le (1-2\gamma^2)^2$.
Substituting in \eqref{eq:boundSumExp} we get that for every $i \in [u]$, 
\begin{equation*}
\exp(-\alpha y_i f_k(x_i) \le \sum_{i \in [u]}{\exp(-\alpha y_i f_k(x_i)}) = u\prod_{\ell \in [k]}{Z_\ell} \le u \cdot \left(1 - 2\gamma^2\right)^k \le \exp(\ln u - 2k\gamma^2) \;,
\end{equation*}
and therefore 
\begin{equation}
y_i f(x_i) = \frac{1}{k}y_i f_k(x_i) \ge \frac{1}{k\alpha}(2k\gamma^2 - \ln u) \;.
\label{eq:marginBound}
\end{equation}
Since $\ln(1+x) \le x$ for all $x \ge 0$ we get that 
\begin{equation*}
\alpha = \frac{1}{2}\ln\left(\frac{1+2\gamma}{1-2\gamma}\right) = \frac{1}{2}\ln\left(1 + \frac{4\gamma}{1-2\gamma}\right) \le \frac{2\gamma}{1 - 2\gamma} \le 4\gamma \;,
\end{equation*}
where the last inequality follows from the fact that $\gamma \in (0,1/4)$. Substituting in \eqref{eq:marginBound} we get that 
\begin{equation*}
y_i f(x_i) \ge \frac{1}{4k\gamma}(2k\gamma^2 - \ln u) = \frac{\gamma}{2} - \frac{\ln u}{4 k \gamma} \;.
\end{equation*}
Recall that $k = \gamma^{-2}\ln u$, and therefore $y_i f(x_i) \ge \gamma/4 = \theta$.
\end{proof}

\section{Conclusions}
In this work, we showed almost tight margin-based generalization lower bounds for voting classifiers. These new bounds essentially complete the theory of generalization for voting classifers based on margins alone. Closing the remaining gap between the upper and lower bounds is an intriguing open problem and we hope our techniques might inspire further improvements.
Our results come in the form of two theorems, one showing generalization lower bounds for \emph{any} algorithm producing a voting classifier, and a slightly stronger lower bound showing the \emph{existence} of a voting classifier with poor generalization. 
This raises the important question of whether specific boosting algorithms can produce voting classifiers that avoid the $\ln m$ factor in the second lower bound via a careful analysis tailored to the algorithm. As a final important direction for future work, we suggest investigating whether natural parameters other than margins may be used to better explain the practical generalization error of voting classifiers. At least, we now have an almost tight understanding, if no further parameters are taken into consideration. 

%

\newcommand{\etalchar}[1]{$^{#1}$}

\end{document}